\documentclass[sn-mathphys,Numbered]{sn-jnl}

\usepackage{graphicx}
\usepackage{subfigure}
\usepackage{multirow}%
\usepackage{amsmath,amssymb,amsfonts}%
\usepackage{amsthm}%
\usepackage{mathrsfs}%
\usepackage[title]{appendix}%
\usepackage{xcolor}%
\usepackage{textcomp}%
\usepackage{manyfoot}%
\usepackage{booktabs}%
\usepackage{algpseudocode}%
\usepackage{listings}%
\usepackage{matlab-prettifier}
\usepackage{changes}
\RequirePackage{algorithm,algpseudocode}
\usepackage{rotating}
\usepackage{tikz}
\usetikzlibrary{arrows, decorations.pathmorphing, backgrounds, positioning, fit, petri, automata}
\usetikzlibrary{shadows,arrows,positioning}

\newtheorem{thm}{Theorem}

\newtheorem{lem}{Lemma}

\newtheorem{rem}{Remark}

\begin{document}

\title[Fast estimation of Gaussian mixture components via centering and singular value thresholding]{Fast estimation of Gaussian mixture components via centering and singular value thresholding}

\author*[1]{\fnm{Huan} \sur{Qing}}\email{qinghuan@u.nus.edu}
\affil[1]{\orgdiv{School of Economics and Finance}, \orgname{Chongqing University of Technology}, \city{Chongqing}, \postcode{400054}, \state{Chongqing}, \country{China}}


\abstract{Estimating the number of components is a fundamental challenge in unsupervised learning, particularly when dealing with high-dimensional data with many components or severely imbalanced component sizes. This paper addresses this challenge for classical Gaussian mixture models. The proposed estimator is simple: center the data, compute the singular values of the centered matrix, and count those above a threshold. No iterative fitting, no likelihood calculation, and no prior knowledge of the number of components are required. We prove that, under a mild separation condition on the component centers, the estimator consistently recovers the true number of components. The result holds in high-dimensional settings where the dimension can be much larger than the sample size. It also holds when the number of components grows to the smaller of the dimension and the sample size, even under severe imbalance among component sizes. Computationally, the method is extremely fast: for example, it processes ten million samples in one hundred dimensions within one minute. Extensive experimental studies confirm its accuracy in challenging settings such as high dimensionality, many components, and severe class imbalance.
}
\keywords{Gaussian mixture model, number of  components, spectral thresholding, centering, consistency}

\pacs[MSC Classification]{62H30; 91C20; 62P15}
\maketitle
\section{Introduction}\label{sec:intro}
A fundamental challenge in unsupervised learning is to discover the latent group structure of unlabelled data \citep{jain1999data,jain2010data,gormley2023model}. Consider, for instance, a medical study where hundreds of thousands of patient records contain many clinical measurements. Without any diagnostic labels, one may ask whether the patients naturally divide into a few subgroups corresponding to distinct disease stages or risk profiles. Similarly, a large collection of handwritten digit images, each represented by a high‑dimensional vector of pixel intensities, raises the question of how many distinct digit classes are present. In both examples the number of underlying groups is unknown, and the data can be massive, with the number of variables sometimes far exceeding the number of samples or vice versa.

The Gaussian mixture model (GMM) provides a classical and effective model-based unsupervised clustering framework for such problems \citep{pearson1894iii}. It assumes that each observation is drawn from one of several Gaussian components, each with its own mean and covariance. GMMs are widely used because they can approximate any continuous distribution arbitrarily well and offer a probabilistic interpretation of cluster membership \citep{lindsay1995mixture,mclachlan2000finite,fop2018variable}. The standard tool for fitting a GMM is the expectation‑maximisation (EM) algorithm \citep{dempster1977maximum}, which iteratively estimates the component parameters. However, EM requires the number of components \(K\) to be specified in advance. Recent years, a large body of recent work has established sharp theoretical guarantees for clustering under the GMM. For example, \citep{lu2016statistical} proved that Lloyd's algorithm, when properly initialized, attains an exponential rate of misclassification. \citep{loffler2021optimality} demonstrated that spectral clustering is minimax optimal for fixed \(K\). \citep{chen2021cutoff} and \citep{ndaoud2022sharp} derived sharp cutoffs for exact recovery using semidefinite programming and a hollowed Gram matrix, respectively, while \citep{li2025exact} further characterised the exact recovery threshold for mixtures with entrywise heterogeneous noise. \citep{chen2024achieving} extended the theory to anisotropic Gaussian mixtures where clusters have different covariances. It should be emphasized that, in all these state‑of‑the‑art theoretical developments under GMMs, a key quantity that repeatedly appears is the minimum distance between any two distinct component centres, denoted by \(\Delta\) (see its formal definition in Equation (\ref{Defin:Delta})). This parameter directly governs the difficulty of the clustering problem: when \(\Delta\) is large, the clusters are well separated and easy to distinguish; when \(\Delta\) is small, the clusters overlap and become nearly impossible to separate even with a large sample size. The theoretical conditions in these advanced works cited above are invariably expressed in terms of \(\Delta\) relative to the noise level and the sample size, highlighting its fundamental role. Despite these substantial advances, all of the above methods require the true number of components \(K\) to be known in advance.

In practice, however, \(K\) is rarely known in advance. A common heuristic is to run EM over a grid of candidate values and then select one using an information criterion such as AIC \citep{akaike2003new} or BIC \citep{schwarz1978estimating}. This approach is computationally burdensome in high dimensions, lacks theoretical guarantees for consistency, and often fails when the data are large or imbalanced.  Over the years, many methods have been developed to estimate \(K\) automatically. Well‑known examples include the Gap statistic \citep{tibshirani2001estimating}, the X‑means algorithm \citep{pelleg2000x}, the G‑means algorithm \citep{hamerly2003learning}, the PG‑means algorithm \citep{feng2006pg},  EM‑based model selection approaches \citep{melnykov2012initializing}, Bayesian approaches \citep{corduneanu2001variational,constantinopoulos2006bayesian,bishop2006pattern,malsiner2016model,teklehaymanot2018bayesian}, and penalized likelihood techniques \citep{huang2017model,budanova2025penalized}. In contrast to the state‑of‑the‑art clustering results discussed above, which explicitly characterize the required separation \(\Delta\) between component centres, none of these methods for estimating \(K\) provides a theoretical condition on \(\Delta\) that guarantees consistent recovery. Instead, their theoretical guarantees are typically derived under other restrictive assumptions. Nevertheless, most of these approaches are iterative, relying on the EM algorithm or similar iterative procedures, and therefore become computationally prohibitive when the sample size \(n\) or the data dimension \(p\) reaches the order of millions. Moreover, no existing method provides a simple non‑iterative estimator that provably attains the fundamental identifiability bound \(K \le \min(p,n)\) without requiring a priori upper bound on \(K\) or assuming \(K\) fixed, and existing consistency results seldom incorporate explicit conditions on the balance of cluster sizes even though this quantity also critically governs the difficulty of the estimation problem. As a consequence, such methods frequently overfit or underfit in high‑dimensional, severely imbalanced, or large‑scale settings.

In this paper, we consider the problem of estimating the true number of components \(K\) in the classical Gaussian mixture model. Our goal is to overcome the limitations of existing methods: they are iterative, lack explicit conditions on the centre separation \(\Delta\), require an upper bound on \(K\) or assume it fixed, and cannot reach the fundamental limit \(K \le \min(p,n)\). Our main contributions are as follows.

\begin{itemize}
    \item \textbf{A non‑iterative algorithm based on centering and singular value thresholding.} The estimator centres the data matrix, computes the singular values of the centred matrix, and counts how many of them exceed a threshold determined by the noise level. The procedure requires no iterative fitting, no likelihood evaluation, and no prior specification of the true number of components \(K\). The only tuning parameter is a sequence that tends to infinity slowly, and its specific choice has little effect on the result.
    
    \item \textbf{Theoretical consistency with explicit conditions and a justification of centering.} For the classical Gaussian mixture model, we prove that the estimator recovers the true number of components \(K\) with probability approaching one, under a mild separation condition on the component centres. The separation condition is stated explicitly in terms of the centre separation and the cluster balance parameters. The theory allows \(K\) to grow up to the maximal possible value \(\min(p,n)\), covers both the high‑dimensional regime (\(p \gg n\)) and the sample‑rich regime (\(n \gg p\)), and accommodates severely imbalanced clusters. In addition, we construct counterexamples to show that the centering step is not optional: without it, consistent estimation fails in certain simple settings. This work thus provides a practical and theoretically sound solution to a long‑standing challenge for Gaussian mixture models in unsupervised learning.
    
    \item \textbf{Empirical evaluation on synthetic and real data.} We investigate the performance of the proposed estimator on a range of challenging synthetic scenarios, including high dimensionality, large numbers of clusters (up to \(\min(p,n)\)), and severe class imbalance (e.g., the smallest cluster contains only 20 observations out of one million samples). The method processes ten million samples in one hundred dimensions and returns the estimated \(K\) within one minute. We also evaluate the estimator on several real‑world benchmark datasets, where it either recovers the true number of clusters or provides a meaningful description of the data geometry.
\end{itemize}

The remainder of the paper is organised as follows. Section~\ref{sec:model} formally introduces the model. Section~\ref{sec:alg} presents the algorithm. Section~\ref{sec:theory} provides the main theoretical results and also justifies the necessity of centering. Section~\ref{sec:experiments} conducts simulation studies. Section~\ref{sec:realdata} tests the estimator on real data. Section~\ref{sec:conclusion} concludes this paper with discussions of future directions.

\paragraph*{Notation}
For a vector \(\mathbf{v}\), \(\|\mathbf{v}\|\) denotes its Euclidean norm. For a matrix \(\mathbf{A}\), \(\|\mathbf{A}\|\) denotes its spectral norm (largest singular value), and \(\sigma_i(\mathbf{A})\) is its \(i\)-th largest singular value. \(\mathbf{A}^\top\) is the transpose. \(\mathcal{N}(\boldsymbol{\mu},\boldsymbol{\Sigma})\) represents a multivariate Gaussian distribution with mean \(\boldsymbol{\mu}\) and covariance \(\boldsymbol{\Sigma}\).  For any positive integer \(m\): \([m] = \{1,2,\dots,m\}\), \(\operatorname{diag}(a_1,\dots,a_m)\) is the diagonal matrix with entries \(a_1,\dots,a_m\), \(\mathbf{I}_m\) stands for the \(m\times m\) identity matrix, and \(\mathbf{1}_m\) is the \(m\)-dimensional all-ones vector. \(\mathbf{e}_i\) is the \(i\)-th standard basis vector of appropriate dimension. \(\mathbb{E}(\cdot)\) and \(\mathbb{P}(\cdot)\) denote expectation and probability, respectively. 
\section{Gaussian mixture model}\label{sec:model}

Consider $n$ independent observations drawn from a mixture of $K$ Gaussian components.  For each observation $i$, let $\mathbf{z}_i\in\{1,2,\dots,K\}$ be its unknown component label. In this paper, we consider the following classical Gaussian mixture model:
\begin{align}\label{def:GMM}
\mathbf{x}_i = \boldsymbol{\mu}_{\mathbf{z}_i} + \boldsymbol{\varepsilon}_i,\qquad 
\boldsymbol{\varepsilon}_i\stackrel{\text{i.i.d.}}{\sim}\mathcal{N}(\mathbf{0},\mathbf{I}_p),
\end{align}
where $\boldsymbol{\mu}_1,\boldsymbol{\mu}_2,\dots,\boldsymbol{\mu}_K\in\mathbb{R}^p$ are the component centres.  The number of components $K\ge 1$ is the primary target of inference.

Stack the observations into the $p\times n$ data matrix $\mathbf{X}=[\mathbf{x}_1,\mathbf{x}_2,\dots,\mathbf{x}_n]$, the centres into the $p\times K$ centers matrix $\mathbf{M}=[\boldsymbol{\mu}_1,\boldsymbol{\mu}_2,\dots,\boldsymbol{\mu}_K]\in\mathbb{R}^{p\times K}$ (full column rank), and the membership indicators into the $n\times K$ membership matrix $\mathbf{Z}\in\{0,1\}^{n\times K}$ with $\mathbf{Z}_{ik}=1$ if and only if the $i$‑th observation belongs to the $k$‑th cluster..  The model admits the compact representation
\[
\mathbf{X} = \mathbf{M}\mathbf{Z}^{\top} + \mathbf{E},
\]
with $\mathbf{E}$ containing i.i.d. $\mathcal{N}(0,1)$ entries. For convenience, set $\mathbf{P} = \mathbf{M}\mathbf{Z}^{\top}$. Because $\mathrm{rank}(\mathbf{M})=K$ and $\mathrm{rank}(\mathbf{Z})=K$ since each cluster has at least one observation, we see that $\mathrm{rank}(\mathbf{P})=K$. 

In the broader context of unsupervised learning, a fundamental goal is to uncover hidden structure from unlabelled data.  When a classical Gaussian mixture model is adopted and the number of components \(K\) is known in advance, a task of central interest is to recover the latent component labels—that is, to cluster the observations.  The difficulty of this clustering problem is governed primarily by the distances between the component centres \citep{loffler2021optimality,ndaoud2022sharp}.  When the centres are well separated, the components are easier to distinguish. Conversely, if two centres lie close to one another, the corresponding clusters become nearly indistinguishable even with a large sample.  The key quantity that captures this separation is the minimal pairwise distance among all centres:
\begin{align}\label{Defin:Delta}
\Delta = \min_{k\neq\ell}\|\boldsymbol{\mu}_k-\boldsymbol{\mu}_\ell\|.
\end{align}

A larger \(\Delta\) implies a more favourable clustering problem, whereas a smaller \(\Delta\) demands more data or stronger assumptions to achieve reliable recovery.

Beyond the minimal centre separation, another critical quantity that governs the difficulty of the estimation problem is the balance among the cluster sizes.  When clusters are severely imbalanced, the smallest cluster may contain very few observations, making it harder to identify its presence.  To quantify this aspect, let $n_k = |\{i:\mathbf{Z}_{ik}=1\}|$ be the size of the $k$th cluster and define the balance parameter
\[
\beta = \frac{\min_k n_k}{n/K}\in(0,1].
\]

Thus $\min_k n_k \ge \beta n/K$, with $\beta=1$ corresponding to perfectly balanced clusters.  Importantly, the theoretical analysis in this paper allows $\beta$ to tend to zero as $n$ grows, thereby accommodating arbitrarily unbalanced clusters without imposing a lower bound on the smallest cluster size. 

We operate in the asymptotic regime $n\to\infty$; the dimension $p=p_n$ may increase with $n$ as long as $\sqrt{p_n}+\sqrt{n}\to\infty$ (this includes both fixed $p$ and $p\to\infty$).  No further restrictions are placed on $K$ beyond the identifiability necessity bound $K\le\min(p,n)$.

Given only the data matrix \(\mathbf{X}\), the latent class labels \(\mathbf{z}_i\) cannot be recovered without knowing the true number of components \(K\).  Estimating \(K\) is therefore a more fundamental and challenging task than clustering itself in unsupervised learning: it must be resolved before any meaningful assignment of observations to groups can take place.  This problem lies at the very heart of model selection in unsupervised learning.  In this work, we focus precisely on this problem within the classical Gaussian mixture model.  The next section introduces a simple estimator that requires no prior knowledge of \(K\) and is provably consistent under mild conditions.
\section{Centered singular value thresholding algorithm}
\label{sec:alg}

We begin with a simple observation. If we knew the true global mean (which is $\boldsymbol{\mu}_0 = \frac{1}{n}\sum_{k=1}^K n_k \boldsymbol{\mu}_k$) of the component centres, and subtracted it from every observation, then the centred signal matrix would have rank \(K-1\) when there are two or more components. When there is only one component, the centred signal would be exactly zero. Therefore, the number of nonzero singular values of that ideal centred matrix would directly tell us \(K-1\). In practice, we do not know the global mean, but we can estimate it by the sample mean. Centering the data with the sample mean reduces the rank of the signal from \(K\) to \(K-1\) (for \(K\ge2\)) and leaves the noise almost unchanged. Without centering, the signal would keep rank \(K\). Centering reduces the rank to \(K-1\) and removes the dominant direction, making the remaining signal singular values easier to separate from the noise. As will be rigorously demonstrated in Section~\ref{Whycentering} through explicit counterexamples, centering offers provable advantages over the uncentered approach. So centering is the key first step.

Recall that \(\mathbf{X}\) is the \(p \times n\) data matrix and \(\mathbf{X} = \mathbf{P} + \mathbf{E}\), where \(\mathbf{P}\) contains the component centres and \(\mathbf{E}\) contains independent standard normal noise. Let \(\bar{\mathbf{x}}\) be the sample mean of the columns; note that \(\mathbb{E}(\bar{\mathbf{x}}) = \boldsymbol{\mu}_0\), the true global mean of the component centres. Define the centred data matrix
\[
\widetilde{\mathbf{X}} = \mathbf{X} - \bar{\mathbf{x}} \mathbf{1}_n^\top = \mathbf{X}\mathbf{H},
\quad \mathbf{H} = \mathbf{I}_n - \tfrac{1}{n}\mathbf{1}_n\mathbf{1}_n^\top,
\]
where $\mathbf{H}$ is the centering projection matrix, , which subtracts the sample mean from each column. Correspondingly, the centred signal and centred noise are \(\widetilde{\mathbf{P}} = \mathbf{P}\mathbf{H}\) and \(\widetilde{\mathbf{E}} = \mathbf{E}\mathbf{H}\). Then \(\widetilde{\mathbf{X}} = \widetilde{\mathbf{P}} + \widetilde{\mathbf{E}}\). Now, we compute the singular values of \(\widetilde{\mathbf{X}}\). Because \(\mathbf{H}\) is an orthogonal projection, \(\|\mathbf{H}\|=1\) and thus \(\|\widetilde{\mathbf{E}}\| \le \|\mathbf{E}\|\). A standard result for Gaussian random matrices (Lemma~\ref{lem:noise}) tells us that with high probability, we have
\[
\|\mathbf{E}\| \le \sqrt{p} + \sqrt{n} + t_n,
\]
where \(t_n\) is any sequence satisfying \(t_n\to\infty\) and \(t_n = o(\sqrt{p}+\sqrt{n})\) (for example \(t_n = \log n\)). Hence the same bound holds for \(\|\widetilde{\mathbf{E}}\|\). Set the threshold
\[
T = \sqrt{p} + \sqrt{n} + t_n.
\]

If there is only one component, then \(\widetilde{\mathbf{P}} = \mathbf{0}\) and \(\widetilde{\mathbf{X}} = \widetilde{\mathbf{E}}\). Hence, all singular values of \(\widetilde{\mathbf{X}}\) are below \(T\) with probability tending to one. If there are two or more components and the centres are well separated as made precise in Theorem~\ref{thm:main} provided in the next section, then the smallest positive singular value of \(\widetilde{\mathbf{P}}\) is larger than \(2T\). By Weyl's inequality, the first \(K-1\) singular values of \(\widetilde{\mathbf{X}}\) then exceed \(T\) while the \(K\)-th singular value stays below \(T\). Therefore, the number of singular values of \(\widetilde{\mathbf{X}}\) that are greater than \(T\) is exactly \(K-1\) with high probability. Adding one gives a consistent estimate of \(K\). Algorithm \ref{alg:CSVT} provided below summarizes this idea into practice.

\begin{algorithm}
\caption{Centered singular value thresholding (CSVT)}
\label{alg:CSVT}
\begin{algorithmic}[1]
\Require Data matrix \(\mathbf{X}\in\mathbb{R}^{p\times n}\)
\Ensure Estimated number of components \(\widehat{K}\)
\State Compute the sample mean \(\bar{\mathbf{x}} = \frac{1}{n}\sum_{i=1}^n \mathbf{X}_{:,i}\)
\State Form the centred matrix \(\widetilde{\mathbf{X}} = \mathbf{X} - \bar{\mathbf{x}}\mathbf{1}_n^\top\)  (equivalently \(\widetilde{\mathbf{X}} = \mathbf{X}\mathbf{H}\))
\State Compute the singular values \(\widehat{\sigma}_1 \ge \cdots \ge \widehat{\sigma}_m\) of \(\widetilde{\mathbf{X}}\) with \(m = \min(p,n)\)
\State Choose a sequence \(t_n\) such that \(t_n\to\infty\) and \(t_n=o(\sqrt{p}+\sqrt{n})\) (e.g., \(t_n = \log n\)) and set \(T = \sqrt{p}+\sqrt{n}+t_n\)
\State Count \(r = |\{ i : \widehat{\sigma}_i > T \}|\)
\State \Return \(\widehat{K} = r+1\)
\end{algorithmic}
\end{algorithm}

The computational cost of CSVT consists of three parts. Centering the data matrix subtracts the column mean from each entry, which takes $\mathcal{O}(pn)$ time and requires $\mathcal{O}(pn)$ space to store the matrix. Computing the singular value decomposition (SVD) of the resulting $p\times n$ centred matrix takes $\mathcal{O}(p n \min(p,n))$ time; the space needed for the SVD is also $\mathcal{O}(pn)$. Counting the number of singular values above the threshold adds $\mathcal{O}(\min(p,n))$ time. Hence, CSVT's overall time complexity is $\mathcal{O}(p n \min(p,n))$, and its space complexity is $\mathcal{O}(pn)$.
\section{Theoretical analysis}
\label{sec:theory}

We now turn to the theoretical properties of the proposed estimator. Recall the centered data matrix $\widetilde{\mathbf{X}} = \mathbf{X}\mathbf{H}$ with $\mathbf{H}=\mathbf{I}_n-\frac1n\mathbf{1}_n\mathbf{1}_n^\top$, and the decomposition $\widetilde{\mathbf{X}} = \widetilde{\mathbf{P}} + \widetilde{\mathbf{E}}$ where $\widetilde{\mathbf{P}}=\mathbf{P}\mathbf{H}$ and $\widetilde{\mathbf{E}}=\mathbf{E}\mathbf{H}$. Since $\mathbf{H}$ is an orthogonal projection, we have $\|\widetilde{\mathbf{E}}\|\le\|\mathbf{E}\|$. The behavior of $\widetilde{\mathbf{P}}$ is fundamentally different depending on whether $K=1$ or $K\ge2$. If $K=1$, then all columns of $\mathbf{P}$ are identical, so centering removes the entire signal: $\widetilde{\mathbf{P}}=\mathbf{0}$. If $K\ge2$, the matrix $\widetilde{\mathbf{P}}$ has rank exactly $K-1$, and in particular its $K$-th singular value is zero: $\sigma_K(\widetilde{\mathbf{P}})=0$. This observation is the key to estimating $K$: we need to separate the $K-1$ non‑zero singular values of the signal from the noise.

A natural question is: how large are those non‑zero singular values? The following lemma gives a concrete lower bound in terms of the separation parameter $\Delta$ between component centres, the condition number of the centre matrix, and the cluster balance parameter $\beta$. 
\begin{lem}\label{lem:signal}
When $K\ge2$, then
\[
\sigma_{K-1}(\widetilde{\mathbf{P}}) \;\ge\; \frac{\Delta}{\kappa}\,\sqrt{\frac{\beta n}{2K}},
\]
where $\kappa$ is the condition number of $\mathbf{M}$, i.e., $\kappa = \frac{\sigma_1(\mathbf{M})}{\sigma_K(\mathbf{M})}$.
\end{lem}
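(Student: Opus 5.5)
The plan is to work with the Gram matrix of $\widetilde{\mathbf{P}}$ and split it into a geometric factor coming from $\mathbf{M}$ and a combinatorial factor coming from the cluster sizes. Since $\mathbf{H}$ is symmetric and idempotent,
\[
\widetilde{\mathbf{P}}\widetilde{\mathbf{P}}^\top=\mathbf{M}\mathbf{Z}^\top\mathbf{H}\mathbf{Z}\mathbf{M}^\top=\mathbf{M}\mathbf{G}\mathbf{M}^\top,
\qquad
\mathbf{G}:=\mathbf{Z}^\top\mathbf{H}\mathbf{Z}=\mathbf{D}-\tfrac1n\mathbf{m}\mathbf{m}^\top .
\]
Here $\mathbf{D}=\operatorname{diag}(n_1,\dots,n_K)$ and $\mathbf{m}=(n_1,\dots,n_K)^\top$, because $\mathbf{Z}^\top\mathbf{Z}=\mathbf{D}$ and $\mathbf{Z}^\top\mathbf{1}_n=\mathbf{m}$. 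The matrix $\mathbf{G}$ is positive semidefinite, and $\sigma_{K-1}(\widetilde{\mathbf{P}})^2=\lambda_{K-1}(\mathbf{M}\mathbf{G}\mathbf{M}^\top)$.

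\textbf{Step 1 (remove $\mathbf{M}$).} The nonzero eigenvalues of $\mathbf{M}\mathbf{G}\mathbf{M}^\top$ coincide with those of $\mathbf{G}^{1/2}\mathbf{M}^\top\mathbf{M}\mathbf{G}^{1/2}$. Because $\mathbf{M}^\top\mathbf{M}\succeq\sigma_K(\mathbf{M})^2\mathbf{I}_K$, we get the Loewner bound
\[
\mathbf{G}^{1/2}\mathbf{M}^\top\mathbf{M}\mathbf{G}^{1/2}\succeq\sigma_K(\mathbf{M})^2\,\mathbf{G}.
\]
By Courant--Fischer this gives $\lambda_{K-1}(\mathbf{M}\mathbf{G}\mathbf{M}^\top)\ge\sigma_K(\mathbf{M})^2\lambda_{K-1}(\mathbf{G})$. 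Both matrices have rank $K-1$, since $\mathbf{G}\mathbf{1}_K=\mathbf{0}$ and $\mathbf{M}$ has full column rank, so no zero eigenvalue is being compared incorrectly.

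\textbf{Step 2 (bound $\lambda_{K-1}(\mathbf{G})$).} The matrix $\mathbf{G}$ is a rank-one negative perturbation of the diagonal matrix $\mathbf{D}$. Weyl's interlacing for rank-one updates gives $\lambda_{i}(\mathbf{D}-\mathbf{v}\mathbf{v}^\top)\ge\lambda_{i+1}(\mathbf{D})$. Taking $i=K-1$ yields
\[
\lambda_{K-1}(\mathbf{G})\ge\lambda_K(\mathbf{D})=\min_k n_k\ge\frac{\beta n}{K}.
\]

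\textbf{Step 3 (convert $\sigma_K(\mathbf{M})$ to $\Delta/\kappa$).} For any $k\neq\ell$ we have $\Delta\le\|\mathbf{M}(\mathbf{e}_k-\mathbf{e}_\ell)\|\le\sqrt2\,\sigma_1(\mathbf{M})$. Hence
\[
\sigma_K(\mathbf{M})=\frac{\sigma_1(\mathbf{M})}{\kappa}\ge\frac{\Delta}{\sqrt2\,\kappa}.
\]

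Combining the three steps gives $\sigma_{K-1}(\widetilde{\mathbf{P}})^2\ge\frac{\Delta^2}{2\kappa^2}\cdot\frac{\beta n}{K}$, and taking square roots is exactly the claimed bound.

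The main point needing care is Step 1. One must justify passing the lower bound $\sigma_K(\mathbf{M})^2$ through a rank-deficient $\mathbf{G}$ at the $(K-1)$-th eigenvalue rather than the $K$-th. The similarity argument with $\mathbf{G}^{1/2}$ handles this cleanly. Steps 2 and 3 are standard.
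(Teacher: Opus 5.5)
Your proof is correct and has the same skeleton as the paper's. You write $\widetilde{\mathbf P}\widetilde{\mathbf P}^\top=\mathbf M(\mathbf D-\tfrac1n\mathbf m\mathbf m^\top)\mathbf M^\top$ and pull out $\sigma_K(\mathbf M)^2$ (your Loewner/$\mathbf G^{1/2}$ argument is the Gram-matrix form of the paper's inequality $\sigma_i(\mathbf M\mathbf A)\ge\sigma_K(\mathbf M)\sigma_i(\mathbf A)$), bound $\lambda_{K-1}(\mathbf D-\tfrac1n\mathbf m\mathbf m^\top)\ge\min_k n_k$, and use $\sigma_K(\mathbf M)\ge\Delta/(\sqrt2\,\kappa)$. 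The only real difference is in the middle bound: you get it in one line from rank-one Weyl interlacing, whereas the paper proves it by hand via Courant--Fischer restricted to $\mathbf 1_K^\perp$ and a Cauchy--Schwarz estimate.
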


Lemma \ref{lem:signal} says that the signal strength grows at least like $\sqrt{n}$ as long as $\Delta$ is bounded away from zero, $\kappa$ does not blow up, and the smallest cluster is not too tiny (i.e., $\beta$ does not vanish too fast). The parameter $\beta$ is allowed to tend to zero, but then the lower bound becomes smaller, which will later force a stronger separation condition.

The following theorem is the main theoretical result of this paper, which states that the proposed CSVT approach works with probability tending to one, under a separation condition that naturally emerges from the theoretical analysis.
\begin{thm}\label{thm:main}
Under the Gaussian mixture model defined in Equation (\ref{def:GMM}) and choose a sequence $t_n$ such that $t_n\to\infty$ and $t_n=o(\sqrt{p}+\sqrt{n})$ (e.g., $t_n=\log n$).  Define $d = \sqrt{p}+\sqrt{n}$ and the threshold $T = d + t_n$. Let $\widehat{K}$ be the output of Algorithm~\ref{alg:CSVT} (which uses the threshold $T$).  Then, the following hold.
\begin{itemize}
    \item If $K=1$, then $\displaystyle\lim_{n\to\infty}\mathbb{P}(\widehat{K}=1)=1$ without any further condition.
    \item If $K\ge2$ and the separation condition
    \[
    \Delta \;\ge\; 2\sqrt{2}\,\frac{\kappa}{\sqrt{\beta}}\,\sqrt{\frac{K}{n}}\;T
    \]
    holds, then $\displaystyle\lim_{n\to\infty}\mathbb{P}(\widehat{K}=K)=1$.
\end{itemize}
\end{thm}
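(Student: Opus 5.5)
The proof combines Lemma~\ref{lem:signal} with a high-probability bound on the noise and Weyl's inequality for singular values. The common event is
\[
\mathcal{A}_n=\{\|\mathbf{E}\|\le d+t_n\}=\{\|\mathbf{E}\|\le T\}.
\]
For a $p\times n$ matrix with i.i.d.\ $\mathcal{N}(0,1)$ entries, Gaussian concentration of the Lipschitz function $\mathbf{E}\mapsto\|\mathbf{E}\|$ around its mean, together with the bound $\mathbb{E}\|\mathbf{E}\|\le\sqrt p+\sqrt n$, gives
\[
\mathbb{P}(\|\mathbf{E}\|>d+t)\le e^{-t^2/2}.
\]
This is the content of Lemma~\ref{lem:noise}. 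Since $t_n\to\infty$, we get $\mathbb{P}(\mathcal{A}_n)\ge 1-e^{-t_n^2/2}\to1$. Because $\mathbf{H}$ is an orthogonal projection, $\|\widetilde{\mathbf{E}}\|\le\|\mathbf{E}\|\le T$ on $\mathcal{A}_n$. The rest of the argument is deterministic on $\mathcal{A}_n$.

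\emph{Case $K=1$.} Here $\widetilde{\mathbf{P}}=\mathbf{0}$, so $\widetilde{\mathbf{X}}=\widetilde{\mathbf{E}}$. On $\mathcal{A}_n$ every singular value satisfies $\widehat\sigma_i\le\|\widetilde{\mathbf{E}}\|\le T$. The algorithm counts only values strictly greater than $T$, so $r=0$ and $\widehat K=1$. Hence $\mathbb{P}(\widehat K=1)\ge\mathbb{P}(\mathcal{A}_n)\to1$.

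\emph{Case $K\ge2$, lower side.} Combining Lemma~\ref{lem:signal} with the separation condition gives
\[
\sigma_{K-1}(\widetilde{\mathbf{P}})\ge\frac{\Delta}{\kappa}\sqrt{\frac{\beta n}{2K}}\ge 2\sqrt2\,\frac{\kappa}{\sqrt\beta}\sqrt{\frac Kn}\,T\cdot\frac{1}{\kappa}\sqrt{\frac{\beta n}{2K}}=2T.
\]
Weyl's inequality for singular values, $|\sigma_i(\widetilde{\mathbf{X}})-\sigma_i(\widetilde{\mathbf{P}})|\le\|\widetilde{\mathbf{E}}\|$, then yields, for $i\le K-1$,
\[
\widehat\sigma_i\ge\sigma_{K-1}(\widetilde{\mathbf{P}})-\|\widetilde{\mathbf{E}}\|\ge 2T-T=T .
\]

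\emph{Case $K\ge2$, upper side.} Since $\operatorname{rank}(\widetilde{\mathbf{P}})\le K-1$, we have $\sigma_K(\widetilde{\mathbf{P}})=0$. Weyl's inequality gives $\widehat\sigma_i\le\|\widetilde{\mathbf{E}}\|\le T$ for all $i\ge K$. If $K=\min(p,n)+1$ there is no $K$-th singular value, but the model requires $K\le\min(p,n)$, so this does not arise. Granting strictness on the lower side (see below), exactly $K-1$ singular values exceed $T$, so $\widehat K=K$ on $\mathcal{A}_n$, which gives the claim.

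\emph{Main obstacle: strict versus non-strict inequalities.} The lower side only gives $\widehat\sigma_i\ge T$, while the algorithm needs $\widehat\sigma_i>T$. I would handle this in one of two ways.
\begin{itemize}
\item Enlarge the noise event to $\{\|\mathbf{E}\|\le d+t_n/2\}$. This still has probability tending to one, since $t_n/2\to\infty$. On it, the lower side gives $\widehat\sigma_i\ge 2T-(d+t_n/2)>T$, and the upper side gives $\widehat\sigma_i\le d+t_n/2<T$.
\item Alternatively, note that $\|\widetilde{\mathbf{E}}\|=T$ has probability zero, so on $\mathcal{A}_n$ the inequality is strict almost surely.
\end{itemize}
I would present the first option because it is cleaner. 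The remaining checks are confirming that the rank of $\widetilde{\mathbf{P}}$ is at most $K-1$ (immediate from $\mathbf{P}\mathbf{H}$, whose columns are centred versions of $K$ centres) and that Lemma~\ref{lem:noise} is stated for $\mathbf{E}$ itself, so the projection step is legitimate.
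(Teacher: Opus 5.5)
Your proposal is correct and follows the paper's proof essentially step for step. It uses the event $\{\|\mathbf{E}\|\le T\}$ from Lemma~\ref{lem:noise}, the bound $\sigma_{K-1}(\widetilde{\mathbf{P}})\ge 2T$ from Lemma~\ref{lem:signal} combined with the separation condition, and Weyl's inequality on both sides. The only real difference is how you get the strict inequality $\widehat\sigma_{K-1}>T$: the paper argues by absolute continuity that $\{\widehat\sigma_{K-1}=T\}$ is a null event, whereas your margin event $\{\|\mathbf{E}\|\le d+t_n/2\}$ gives strictness deterministically with probability at least $1-e^{-t_n^2/8}$, which is slightly cleaner. Note that this event is a smaller event than $\mathcal{A}_n$, so ``enlarge'' should read ``tighten''; you are also right that no strictness is needed for $K=1$ or on the upper side.
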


The separation condition in Theorem~\ref{thm:main} reveals how the difficulty of estimating \(K\) depends on the model parameters.  
A smaller imbalance parameter \(\beta\) (meaning the smallest cluster is very small) forces a larger minimal distance \(\Delta\) between the component centres. This is intuitive: a tiny cluster is easy to overlook, so its centre must be placed further away from the others to be detected. Similarly, a larger condition number \(\kappa\) of the centre matrix \(\mathbf{M}\) indicates that the centres are nearly collinear, concentrating the signal in a low dimensional subspace.  
In that case the smallest nonzero singular value of the signal becomes more fragile, and a larger \(\Delta\) is required to keep it above the noise threshold. Finally, a larger number of components \(K\) also increases the required separation, because the signal is spread across more dimensions, making the weakest signal direction harder to distinguish from noise.

It should be emphasized that Theorem~\ref{thm:main} only requires the natural bound \(K \le \min(p,n)\). This condition is theoretically optimal: if \(K\) exceeds \(\min(p,n)\), then even in the noiseless case the signal matrix \(\mathbf{P}\) has rank at most \(\min(p,n) < K\), so it is impossible to recover \(K\) consistently.  
Thus the proposed method CSVT attains the fundamental limit of estimability. We should also emphasize that the factor \(\sqrt{K}\) appearing in the lower bound for \(\Delta\) is also optimal. To see this, consider a balanced design with \(\beta = 1\) and suppose the \(K\) centres are placed at the vertices of a regular simplex in \(\mathbb{R}^{K-1}\).  
Then the centred signal matrix \(\widetilde{\mathbf{P}}\) has rank \(K-1\) and its smallest positive singular value is of order \(\Delta\sqrt{n/K}\).  
The noise level is of order \(\sqrt{p} + \sqrt{n}\).  
Matching the signal to the noise therefore forces \(\Delta\) to be at least of order \(\sqrt{K}(\sqrt{p}+\sqrt{n})/\sqrt{n}\), which is exactly the dependence captured by Theorem~\ref{thm:main} (up to constants and the factors \(\kappa\) and \(\beta\)).  
Hence the \(\sqrt{K}\) term is unavoidable in general. Furthermore, when \(K = 1\), the situation is simpler. Then centering removes the entire signal, so \(\widetilde{\mathbf{X}} = \widetilde{\mathbf{E}}\) is pure noise. The same concentration bound guarantees that all singular values of \(\widetilde{\mathbf{X}}\) lie below the threshold \(T\) with probability tending to one, and therefore \(\widehat{K}=1\) automatically. Thus, for the case $K=1$, no separation condition is needed because there is no signal to separate.

In terms of the aspect ratio $p/n$, note that the separation condition \(\Delta \ge 2\sqrt{2}\,\frac{\kappa}{\sqrt{\beta}}\,\sqrt{\frac{K}{n}}\,T\) with \(T = \sqrt{p}+\sqrt{n}+t_n\) takes different explicit forms depending on the relative growth of \(p\) and \(n\).  
When \(n\) is much larger than \(p\) (so that \(p/n \to 0\)), we have \(T \sim \sqrt{n}\) and the condition becomes \(\Delta \gtrsim \frac{\kappa}{\sqrt{\beta}}\sqrt{K}\).  
In this case the required separation does not grow with the sample size, which is natural because a large \(n\) makes the signal easier to detect. When \(p\) is much larger than \(n\) (the high dimensional setting with \(p/n \to \infty\)), we have \(T \sim \sqrt{p}\) and the condition becomes \(\Delta \gtrsim \frac{\kappa}{\sqrt{\beta}}\sqrt{\frac{Kp}{n}}\).  
Thus the required separation grows slowly with the dimension, at rate \(\sqrt{p/n}\). When \(p\) and \(n\) are of the same order (so that \(p/n \to c \in (0,\infty)\)), we have \(T \sim (\sqrt{c}+1)\sqrt{n}\) and the condition simplifies to \(\Delta \gtrsim \frac{\kappa}{\sqrt{\beta}}\sqrt{K(1+\sqrt{c})}\).  
In all regimes, the method remains consistent as long as the separation condition holds. This demonstrates that the proposed estimator works effectively even when the dimension exceeds the sample size, a setting where many classical methods fail. The only requirement is that the signal to noise ratio, as captured by the lower bound on \(\Delta\), remains sufficiently large relative to the dimension and the sample size.
\subsection{Why centering is necessary: counterexamples and discussion}\label{Whycentering}
The proposed CSVT estimator centers the data before thresholding the singular values. One might wonder whether this centering step is truly necessary, or whether a simpler procedure (counting the singular values of the raw data matrix \(\mathbf{X}\) that exceed the same threshold \(T\)) could also consistently estimate the true number of components \(K\). In this subsection, we demonstrate that centering is not merely a convenient device but an essential ingredient for universal consistency.  We first show that without centering the estimator fails already for the trivial case \(K=1\).  We then discuss the general advantages of centering for \(K\ge 2\), namely translation invariance and threshold robustness, which hold even when the centre matrix has full column rank.  Finally, we present a more subtle counterexample where the centre matrix is rank deficient (so that the condition number \(\kappa\) is infinite) and the uncentered method breaks down completely, while the centred estimator remains consistent.  These examples highlight the fundamental role of centering.
\subsubsection{Centering is crucial for \(K=1\)}
Consider the case \(K=1\). Then, we have
\[
\mathbf{X} = \boldsymbol{\mu}_1 \mathbf{1}_n^\top + \mathbf{E},
\]
where \(\boldsymbol{\mu}_1\in\mathbb{R}^p\) is the only centre and \(\mathbf{E}\) has i.i.d. \(\mathcal{N}(0,1)\) entries.  
Let \(T = \sqrt{p}+\sqrt{n}+t_n\) with \(t_n\to\infty\) and \(t_n = o(\sqrt{p}+\sqrt{n})\). The following counterexample considers the degenerate situation where the signal is exactly zero (\(\mu_1 = \mathbf{0}\)). Although this violates the full‑rank condition of Theorem \ref{thm:main}, it serves as a stress test: the uncentered estimator fails, while the centered estimator remains consistent. The advantage of centering is that it works for every \(\mu_1\), including this boundary case.
\begin{itemize}
  \item \noindent\textbf{Without centering.} A naive estimator would be $\widehat{K}_{\text{raw}} = |\{i : \sigma_i(\mathbf{X}) > T\}|$. By Lemma \ref{lem:noise}, $\|\mathbf{E}\| \le T$ with probability tending to $1$.  
If $\boldsymbol{\mu}_1 = \mathbf{0}$ (i.e., the data already have zero mean), then $\mathbf{X} = \mathbf{E}$, and consequently all singular values of $\mathbf{X}$ satisfy $\sigma_i(\mathbf{X}) \le T$ with high probability. Hence $\widehat{K}_{\text{raw}} = 0$ w.h.p., which is incorrect because the true number of components is $1$. This counterexample shows that the uncentered method is not universally consistent.
\item \noindent\textbf{After centering.} Let $\mathbf{H} = \mathbf{I}_n - \frac{1}{n}\mathbf{1}_n\mathbf{1}_n^\top$ be the centering matrix, satisfying $\mathbf{1}_n^\top\mathbf{H} = \mathbf{0}$. The centred data matrix is $\widetilde{\mathbf{X}} = \mathbf{X}\mathbf{H}$. For $K=1$, we have
\[
\widetilde{\mathbf{X}} = (\boldsymbol{\mu}_1 \mathbf{1}_n^\top + \mathbf{E})\mathbf{H} = \boldsymbol{\mu}_1 \mathbf{1}_n^\top \mathbf{H} + \mathbf{E}\mathbf{H} = \mathbf{E}\mathbf{H},
\]
because $\mathbf{1}_n^\top \mathbf{H} = \mathbf{0}$. Hence, we have $\|\widetilde{\mathbf{X}}\| \le \|\mathbf{E}\|$ and, by Lemma \ref{lem:noise}, $\|\widetilde{\mathbf{X}}\| \le T$ with probability tending to $1$. Therefore, all singular values of $\widetilde{\mathbf{X}}$ are below $T$ with high probability, so $r = |\{i:\sigma_i(\widetilde{\mathbf{X}}) > T\}| = 0$ w.h.p., and the CSVT estimator returns $\widehat{K} = r+1 = 1$. This conclusion holds \textbf{for every} $\boldsymbol{\mu}_1$ (including $\boldsymbol{\mu}_1 = \mathbf{0}$), without any signal strength condition.
\end{itemize}

Thus, centering makes the estimator universally consistent for \(K=1\), whereas the uncentered version fails not only in the extreme case \(\boldsymbol{\mu}_1 = \mathbf{0}\) but also whenever \(\|\boldsymbol{\mu}_1\|\) is sufficiently small (i.e., \(\sqrt{n}\|\boldsymbol{\mu}_1\|\) is below the noise threshold. A toy example can be found in Remark \ref{rem:centering_K1_demo}.). This fundamental advantage justifies why centering is “the key first step” in our CSVT algorithm.

\begin{rem}\label{rem:centering_K1_demo}
The simulation results displayed in Figure~\ref{fig:centering_K1_demo} (with $n=100$, $p=20$, $\boldsymbol{\mu}_1 =0.1\cdot\mathbf{1}_p$, hence $\|\boldsymbol{\mu}_1 \|\approx0.4472$) clearly demonstrate the necessity of centering when $K=1$.  
The threshold is set to $T=\sqrt{p}+\sqrt{n}+\log n \approx 4.472+10+4.605 \approx 19.077$. From this figure, we clearly observe that:
\begin{itemize}
  \item \texttt{Left panel (uncentered data $\mathbf{X}$)}: All singular values of $\mathbf{X}$ lie below the dashed red line $T$. The largest singular value $\sigma_1(\mathbf{X})$ is approximately $14.5$, which is dominated by the noise because the signal strength $\sqrt{n}\|\boldsymbol{\mu}_1 \| \approx 4.47$ is far smaller than the typical noise level $\sqrt{p}+\sqrt{n}\approx14.47$.  Since no singular value exceeds $T$, the raw count $|\{i:\sigma_i(\mathbf{X})>T\}|$ equals $0$, leading to the incorrect estimate $\widehat{K}_{\text{raw}} = 0$ (instead of the true $K=1$).
\item \texttt{Right panel (centered data $\widetilde{\mathbf{X}} = \mathbf{X}\mathbf{H}$):} Centering removes the constant shift $\boldsymbol{\mu}_1 \mathbf{1}_n^\top$ completely because $\mathbf{1}_n^\top\mathbf{H}=\mathbf{0}$. Hence $\widetilde{\mathbf{X}} = \mathbf{E}\mathbf{H}$, which is pure noise. All singular values of $\widetilde{\mathbf{X}}$ are also bounded by $\|\mathbf{E}\| \le T$ with high probability, so again $r = |\{ i : \widehat{\sigma}_i > T \}|=0$. However, the CSVT estimator adds $1$ to this count, yielding $\widehat{K}= r+1 = 1$, which correctly recovers the true number of components.
\end{itemize}

The figure therefore validates the theoretical claim: for $K=1$, centering guarantees consistent estimation for \emph{any} $\boldsymbol{\mu}_1 $ (including $\boldsymbol{\mu}_1 =\mathbf{0}$), whereas the uncentered approach fails whenever the signal is too weak to push $\sigma_1(\mathbf{X})$ above the noise threshold. 

\begin{figure}[!htbp]
\centering
\includegraphics[width=0.9\textwidth]{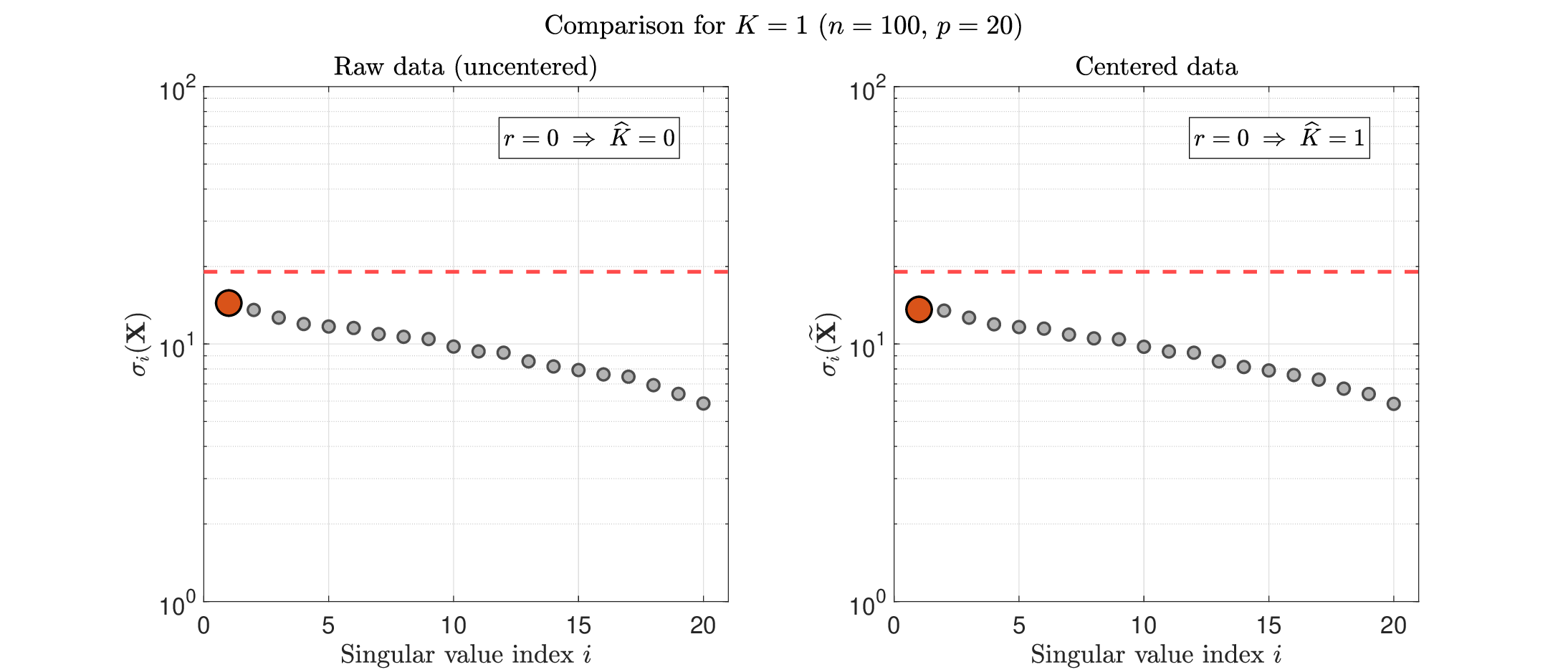}
\caption{Comparison of singular values for uncentered (left) and centered (right) data when $K=1$. 
The dashed red line marks the threshold $T=\sqrt{p}+\sqrt{n}+\log n$. 
In the left panel, all singular values are below $T$, so the raw estimate is $0$ (incorrect). 
In the right panel, after centering, all singular values are also below $T$, but adding $1$ yields the correct estimate $\widehat{K}=1$.}
\label{fig:centering_K1_demo}
\end{figure}
\end{rem}
\subsubsection{General advantages of centering for \(K\ge 2\).}
Although the worst-case theoretical lower bounds for \(\sigma_K(\mathbf{P})\) (without centering) and \(\sigma_{K-1}(\widetilde{\mathbf{P}})\) (with centering) are identical when $\kappa$ is finite by Lemmas \ref{lem:signal} and \ref{lem:uncentered_signal}, centering offers a fundamental advantage when \(K\ge 2\): translation invariance and threshold robustness.  
For any \(\mathbf{c}\in\mathbb{R}^p\), we have
\[
(\mathbf{X}+\mathbf{c}\mathbf{1}_n^\top)\mathbf{H} = \mathbf{X}\mathbf{H} = \widetilde{\mathbf{X}},
\]
because \(\mathbf{1}_n^\top\mathbf{H}=0\). Hence, the centred data matrix (and therefore its singular values) is invariant under global shifts of all data points. As a consequence, the singular values of \(\widetilde{\mathbf{P}}\) depend only on the differences \(\boldsymbol{\mu}_k-\boldsymbol{\mu}_\ell\), not on the global mean \(\boldsymbol{\mu}_0\). The threshold \(T = \sqrt{p}+\sqrt{n}+t_n\) depends only on the noise level, so the same \(T\) works uniformly for any signal magnitude.  

Without centering, \(\sigma_K(\mathbf{P})\) can be made arbitrarily small even when the centres are well separated, either by adding a large common shift (which does not affect the differences but increases the global mean) or by making the centre matrix nearly rank deficient.  The following explicit counterexample for \(K=2\) takes the extreme case where the centres are collinear, so that \(\mathbf{M}\) has rank \(1\) and the condition number \(\kappa\) is infinite.  In this setting, the uncentered method fails completely, while the centred estimator continues to work thanks to translation invariance.
\begin{lem}[Uncentered case can fail even with fixed \(\Delta\)]\label{pathology}
Let \(K=2\), \(p\ge 2\), \(n\ge 2\). Choose centres
\[
\boldsymbol{\mu}_1 = \mathbf{c} + \mathbf{v}, \qquad \boldsymbol{\mu}_2 = \mathbf{c} - \mathbf{v},
\]
with \(\mathbf{c}=t\mathbf{v}\) for some \(t>0\), and \(\|\mathbf{v}\| = \Delta/2\) where \(\Delta = \|\boldsymbol{\mu}_1-\boldsymbol{\mu}_2\|>0\) is fixed.  
Let the sample sizes be \(n_1,n_2\ge 1\) with \(n=n_1+n_2\). Define \(\mathbf{w}^\top\in\mathbb{R}^{1\times n}\) such that \(\mathbf{w}_i = +1\) if observation \(i\) belongs to cluster \(1\) and \(\mathbf{w}_i = -1\) if it belongs to cluster \(2\).  
The signal matrix is \(\mathbf{P} = \boldsymbol{\mu}_1\mathbf{z}_1^\top+ \boldsymbol{\mu}_2\mathbf{z}_2^\top = \mathbf{c}\mathbf{1}_n^\top + \mathbf{v}\mathbf{w}^\top\).  
Let \(\mathbf{X} = \mathbf{P} + \mathbf{E}\) where \(\mathbf{E}\) has i.i.d. \(\mathcal{N}(0,1)\) entries. Define the threshold \(T = \sqrt{p}+\sqrt{n}+t_n\) with \(t_n\to\infty\), \(t_n=o(\sqrt{p}+\sqrt{n})\).  
Consider the uncentered estimator \(\widehat{K}_{\text{raw}} = |\{i:\sigma_i(\mathbf{X}) > T\}|\). Then for any sufficiently large \(t\), with probability tending to \(1\), we have
\[
\widehat{K}_{\text{raw}} = 1 \quad\text{while}\quad K=2.
\]

Thus, the uncentered estimator is not consistent, even though the separation \(\Delta\) is fixed and positive.
\end{lem}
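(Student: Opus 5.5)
Since \(\mathbf{c}=t\mathbf{v}\), the signal matrix collapses to a rank-one matrix:
\[
\mathbf{P}=\mathbf{v}\,(t\mathbf{1}_n+\mathbf{w})^\top .
\]
So \(\sigma_1(\mathbf{P})=\|\mathbf{v}\|\,\|t\mathbf{1}_n+\mathbf{w}\|\) and \(\sigma_i(\mathbf{P})=0\) for \(i\ge 2\). I would then compare the singular values of \(\mathbf{X}\) with those of \(\mathbf{P}\) using Weyl's inequality, \(|\sigma_i(\mathbf{X})-\sigma_i(\mathbf{P})|\le\|\mathbf{E}\|\), together with the Gaussian bound of Lemma~\ref{lem:noise}. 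That lemma gives \(\|\mathbf{E}\|\le T\) with probability tending to one; it is the same bound already used in the proof of Theorem~\ref{thm:main}.

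\textbf{Step 1 (\(\sigma_2(\mathbf{X})\le T\)).} Because \(\sigma_2(\mathbf{P})=0\), Weyl gives \(\sigma_2(\mathbf{X})\le\|\mathbf{E}\|\le T\) on the high-probability event. Hence at most one singular value of \(\mathbf{X}\) exceeds \(T\), and \(\widehat{K}_{\text{raw}}\le 1\). This step needs no condition on \(t\).

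\textbf{Step 2 (\(\sigma_1(\mathbf{X})>T\)).} Here I use the choice of \(t\). Since \(\mathbf{w}_i=\pm1\), each entry of \(t\mathbf{1}_n+\mathbf{w}\) is at least \(t-1\) in absolute value. Therefore
\[
\sigma_1(\mathbf{P})\ge \frac{\Delta}{2}(t-1)\sqrt{n}.
\]
Weyl gives \(\sigma_1(\mathbf{X})\ge\sigma_1(\mathbf{P})-\|\mathbf{E}\|\ge \frac{\Delta}{2}(t-1)\sqrt{n}-T\). It therefore suffices to have
\[
\frac{\Delta}{2}(t-1)\sqrt{n}>2T.
\]
Now \(T=(1+o(1))(\sqrt p+\sqrt n)\), so this holds for all large \(n\) whenever \(t>1+\frac{4(1+\epsilon)}{\Delta}\cdot\frac{\sqrt p+\sqrt n}{\sqrt n}\) for some fixed \(\epsilon>0\).

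\textbf{Main obstacle.} The difficulty is the phrase ``sufficiently large \(t\)'' when \(p\) may grow faster than \(n\). If \(p/n\) stays bounded, a fixed constant \(t\) works. If \(p/n\to\infty\), I would read the statement as allowing \(t=t_n\) to be any sequence with \(t_n\gtrsim\sqrt{p/n}\), and I would state this explicitly. The same bookkeeping confirms the condition in Step~2 is genuinely attainable, which I would expect to be the only delicate point in the argument.

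\textbf{Conclusion.} Combining the two steps, \(\widehat{K}_{\text{raw}}=1\neq 2=K\) with probability tending to one. For contrast, I would note why the centred estimator is unaffected. We have \(\mathbf{P}\mathbf{H}=\mathbf{v}\mathbf{w}^\top\mathbf{H}\), which does not depend on \(t\), so the CSVT estimator behaves as in Theorem~\ref{thm:main} (up to its signal-strength requirement). This contrast, rather than anything in the proof of the lemma itself, is the point of the example.
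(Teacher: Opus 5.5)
Your proposal is correct and follows the paper's proof essentially step for step. Both use the rank-one factorization $\mathbf{P}=\mathbf{v}(t\mathbf{1}_n+\mathbf{w})^\top$, then Weyl's inequality together with Lemma~\ref{lem:noise} to get $\sigma_2(\mathbf{X})\le T$, and the bound $\sigma_1(\mathbf{P})\ge\frac{\Delta}{2}(t-1)\sqrt{n}>2T$ to get $\sigma_1(\mathbf{X})>T$. Your explicit remark is also correct: $t$ must satisfy $t-1\gtrsim(\sqrt{p}+\sqrt{n})/(\Delta\sqrt{n})$, so $t$ may need to grow with $n$ when $p/n\to\infty$, which the paper's ``choosing $t$ large enough'' leaves implicit.
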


The centred estimator \(\widehat{K} = |\{i:\sigma_i(\widetilde{\mathbf{X}})=\sigma_i(\mathbf{X}\mathbf{H}) > T\}|+1\) with \(\mathbf{H} = \mathbf{I}_n - \frac{1}{n}\mathbf{1}_n\mathbf{1}_n^\top\) in Algorithm~\ref{alg:CSVT} does not suffer from the pathology in Lemma~\ref{pathology}.  
Indeed, under the same settings of Lemma~\ref{pathology}, we have
\[
\widetilde{\mathbf{P}} = \mathbf{P}\mathbf{H} = \mathbf{v}\mathbf{w}^\top\mathbf{H},
\]
which is a rank-1 outer product. Hence, we have
\[
\sigma_1(\widetilde{\mathbf{P}}) = \|\mathbf{v}\| \;\|\mathbf{w}^\top\mathbf{H}\|=\frac{\Delta}{2}\;\|\mathbf{w}^\top\mathbf{H}\|
\]
because \(\|\mathbf{v}\| = \Delta/2\). For \(\|\mathbf{w}^\top\mathbf{H}\|\), we have
\[
\mathbf{w}^\top\mathbf{H} = \mathbf{w}^\top - \bar w\mathbf{1}_n^\top,\qquad \bar w = \frac{n_1-n_2}{n},
\]
which gives
\[
\|\mathbf{w}^\top\mathbf{H}\|^2 = \sum_i (\mathbf{w}_i-\bar w)^2 = \sum_i \mathbf{w}_i^2 - n\bar w^2 = n - n\left(\frac{n_1-n_2}{n}\right)^2 = \frac{4n_1n_2}{n}.
\]

Thus, we get
\[
\|\mathbf{w}^\top\mathbf{H}\| = \sqrt{\frac{4n_1n_2}{n}}.
\]

Finally, we obtain
\[
\sigma_1(\widetilde{\mathbf{P}}) = \frac{\Delta}{2} \cdot \sqrt{\frac{4n_1n_2}{n}} = \Delta\sqrt{\frac{n_1n_2}{n}},
\]
which is independent of \(t\) and $\kappa$. Moreover, \(\sigma_2(\widetilde{\mathbf{P}})=0\). Provided that \(\Delta\) satisfies the separation condition of Theorem~\ref{thm:main} (which here simplifies to \(\sigma_1(\widetilde{\mathbf{P}})= \Delta\sqrt{\frac{n_1n_2}{n}}\gtrsim T\) because \(\beta\) and \(\kappa\) are not applicable), the centred method correctly identifies \(K=2\) with high probability.  Thus, centering removes the dependence on the global mean and guarantees consistent estimation even in this extreme rank‑deficient scenario.
\section{Simulation experiments}
\label{sec:experiments}

We conduct simulation studies to evaluate the performance of the proposed CSVT estimator. Unless specified, the data are generated from the Gaussian mixture model defined in Equation \eqref{def:GMM}, where the noise matrix \(\mathbf{E}\) has i.i.d. standard normal entries. All simulations are repeated \(100\) times independently. In each repetition, we compute the estimated number of components \(\widehat{K}\) using Algorithm~\ref{alg:CSVT} with the default threshold \(T = \sqrt{p}+\sqrt{n}+\log n\). In each repetition, we record whether the estimated number of components \(\widehat{K}\) equals the true number of components $K$, and we will report the proportion of runs where this occurs (i.e., the accuracy rate). The experiments are designed to cover a wide range of challenging settings, including extreme sample sizes, high dimensionality, large numbers of clusters, and severe cluster imbalance. We emphasize that we do not compare our CSVT estimator with existing methods in this section, as to the best of our knowledge, no existing method can reliably estimate the number of components under the extreme settings considered here. These settings include, for example, sample sizes or dimensions as large as ten million, numbers of components reaching the maximal possible value $\min(n,p)$, and cases where the smallest cluster contains only twenty observations out of one million. Classical information criteria such as AIC and BIC rely on likelihood evaluations that become numerically unstable or ill-defined in high dimensions. The expectation-maximization algorithm, a standard tool for fitting Gaussian mixtures, often fails to converge or runs out of memory when $n$ or $p$ reaches millions, and $K$ grows large. Our goal is therefore not to benchmark against methods that cannot be meaningfully executed here. Instead, we focus on verifying that the proposed CSVT estimator works as theoretically predicted in the very scenarios where existing approaches are not applicable.

\subsection{Data generation and parameter choices}\label{simdatagen}

To isolate the effect of each model parameter, we construct the centre matrix \(\mathbf{M}\) to be orthogonal. Specifically, we set
\[
\mathbf{M} = \frac{\Delta}{\sqrt{2}} \mathbf{Q},
\]
where \(\mathbf{Q}\) is a \(p\times K\) matrix with orthonormal columns. Then for any two distinct centres we have \(\|\boldsymbol{\mu}_k - \boldsymbol{\mu}_\ell\| = \Delta\), and the condition number is \(\kappa = 1\). This choice satisfies the assumptions of Theorem~\ref{thm:main} in the simplest manner.

The cluster sizes are determined by the imbalance parameter \(\beta\). Let \(n_k\) be the size of the \(k\)th cluster. We set  
\[
n_{\min} = \max\bigl(1,\; \lfloor \beta n / K \rfloor \bigr).
\]  

We first assign \(n_{\min}\) observations to each of the \(K\) clusters. Then we select one cluster uniformly at random to keep its size at \(n_{\min}\) (the smallest cluster), and distribute the remaining \(n - K n_{\min}\) observations as evenly as possible among the other \(K-1\) clusters. For each observation \(i\) belonging to cluster \(k\), we draw \(\mathbf{x}_i \sim \mathcal{N}(\boldsymbol{\mu}_k, \mathbf{I}_p)\) independently.

Unless specified, to ensure that the separation condition of Theorem~\ref{thm:main} is satisfied, we choose
\[
\Delta = 2\sqrt{2}\,\frac{1}{\sqrt{\beta}}\,\sqrt{\frac{K}{n}}\;T,
\]
i.e. we take \(\Delta\) equal to the theoretical lower bound. For experiments where \(n, p, K\) or \(\beta\) vary, the value of \(\Delta\) is updated accordingly using the corresponding \(n\), \(p\), \(K\), \(\beta\) and \(T\).

\subsection{Experiment designs}

We organise the simulations into five parts, each focusing on a different aspect of the estimator.

\subsubsection{Effect of sample size and dimension}

We consider three regimes: sample‑rich (\(n\gg p\)), high‑dimensional (\(p\gg n\)), and balanced (\(n=p\)).

\begin{itemize}
    \item \textbf{Sample‑rich regime:} Fix the dimension \(p = 100\) and the number of components \(K = 99\). Set the imbalance parameter \(\beta = 0.1\) (unbalanced clusters). Let the sample size \(n\) take ten values from \(1\times10^6\) to \(1\times10^7\) in increments of \(1\times10^6\), i.e. \(n = 1\times10^6, 2\times10^6, \dots, 1\times10^7\).
    \item \textbf{High‑dimensional regime:} Fix the sample size \(n = 100\) and the number of components \(K = 10\). Set \(\beta = 1\) (balanced clusters). Let the dimension \(p\) take ten values from \(1\times10^6\) to \(1\times10^7\) in increments of \(1\times10^6\), i.e. \(p = 1\times10^6, 2\times10^6, \dots, 1\times10^7\).
    \item \textbf{Balanced regime:} Let \(n = p\) and let this common value vary from \(1000\) to \(10000\) in steps of \(1000\). Set \(K = n/100\) (so that the number of components grows with the sample size) and \(\beta = 0.1\) (unbalanced clusters).
\end{itemize}
\begin{figure}[!htbp]
\centering
\resizebox{\columnwidth}{!}{
{\includegraphics[width=3\textwidth]{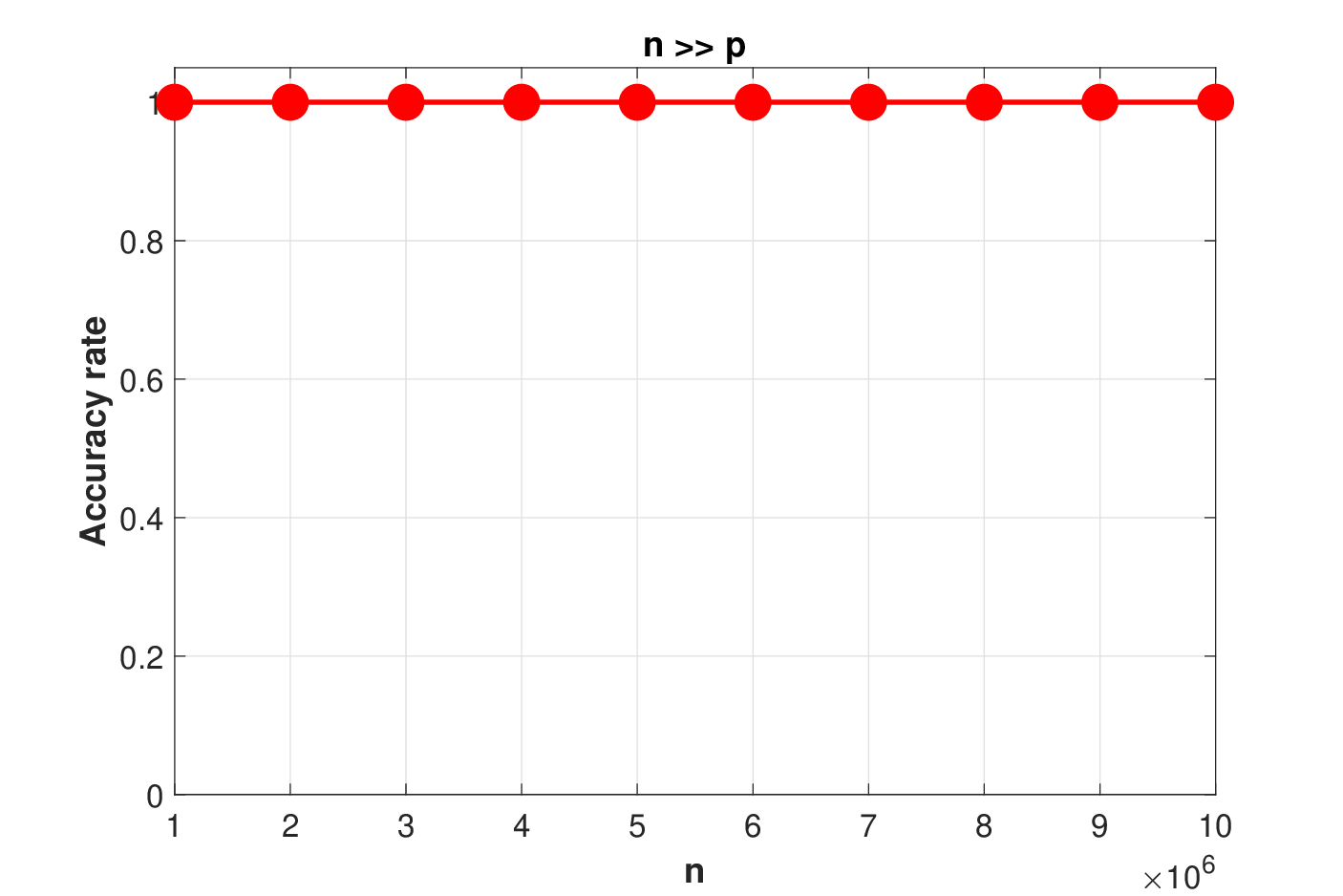}}
{\includegraphics[width=3\textwidth]{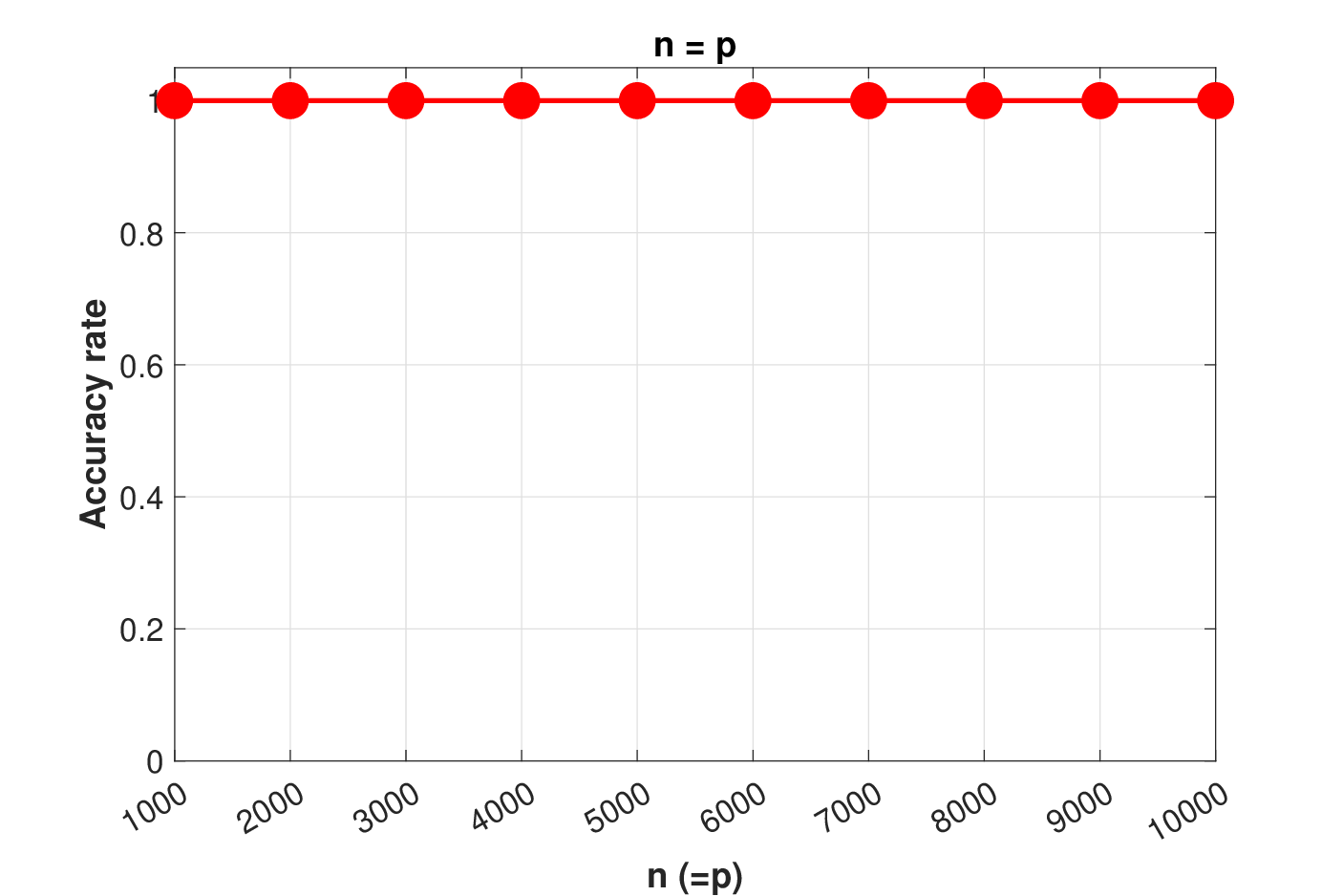}}
{\includegraphics[width=3\textwidth]{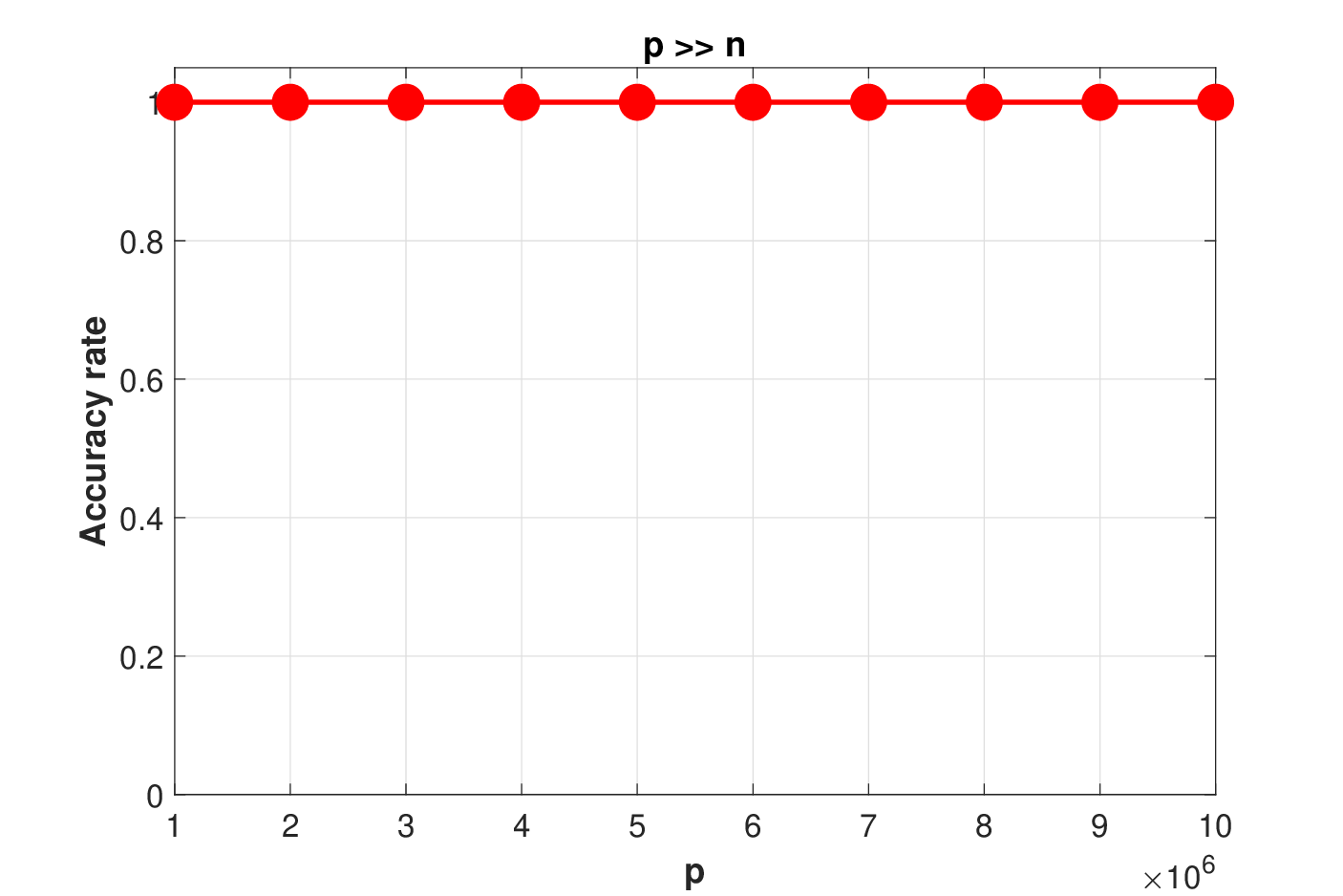}}
}
\resizebox{\columnwidth}{!}{
{\includegraphics[width=3\textwidth]{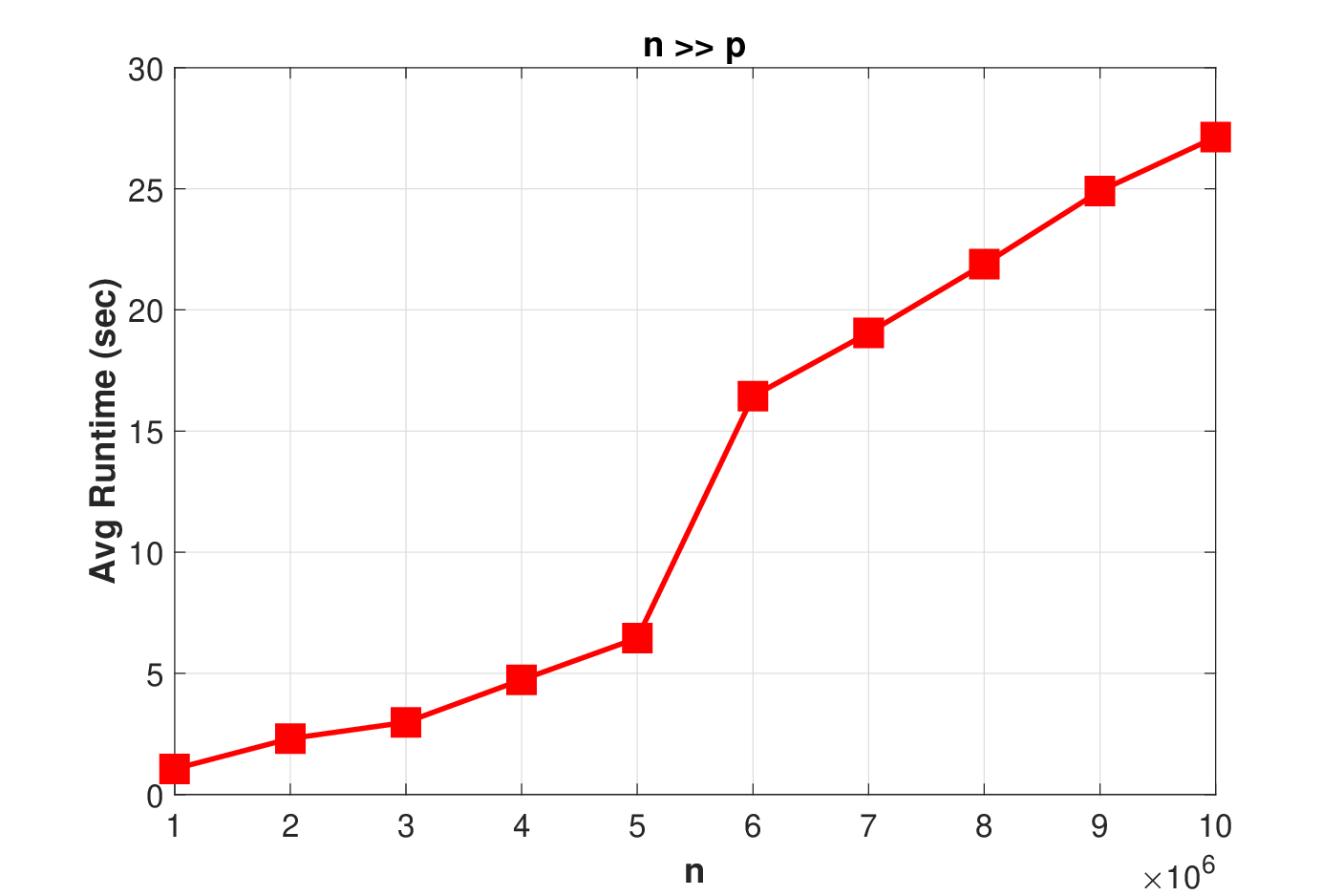}}
{\includegraphics[width=3\textwidth]{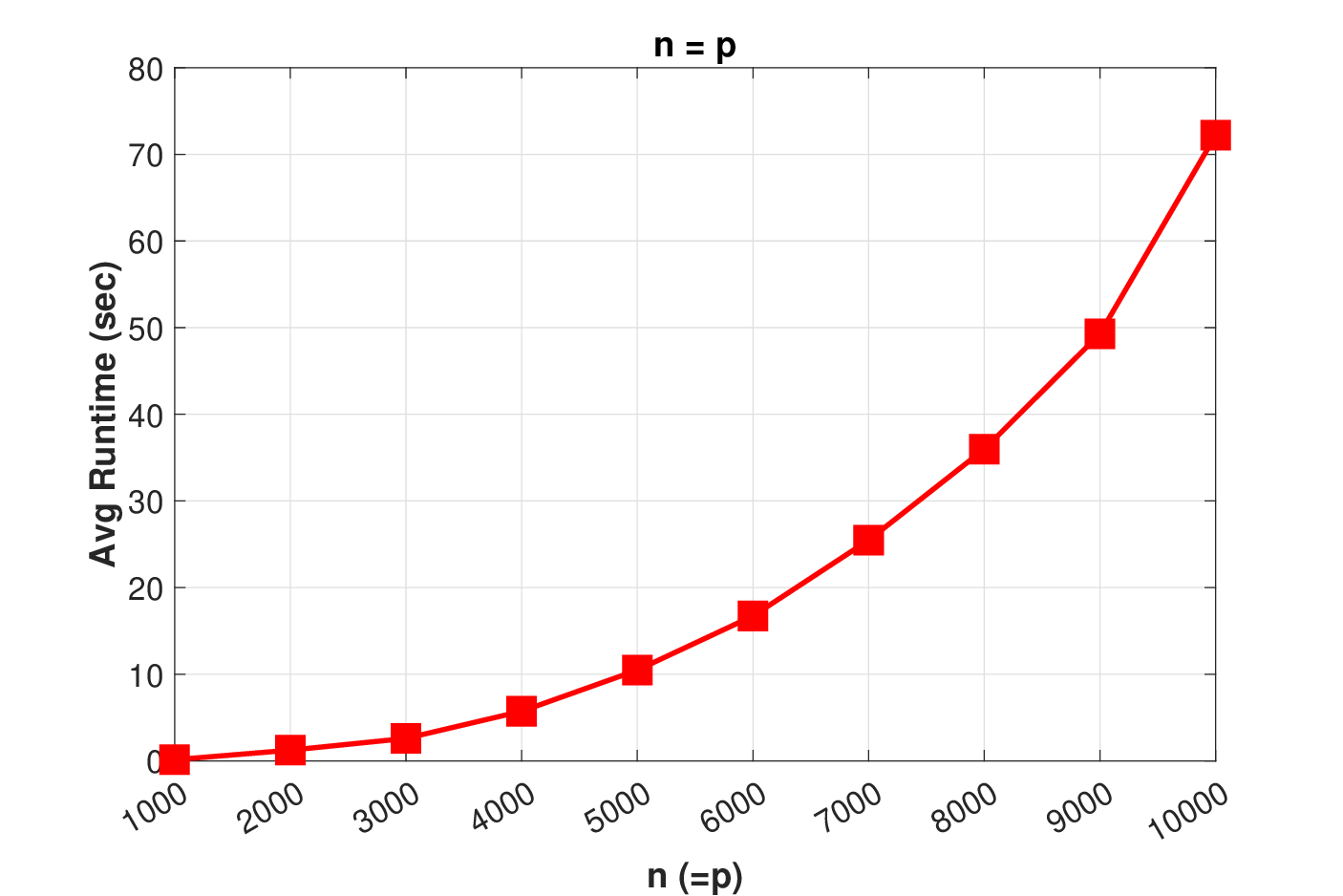}}
{\includegraphics[width=3\textwidth]{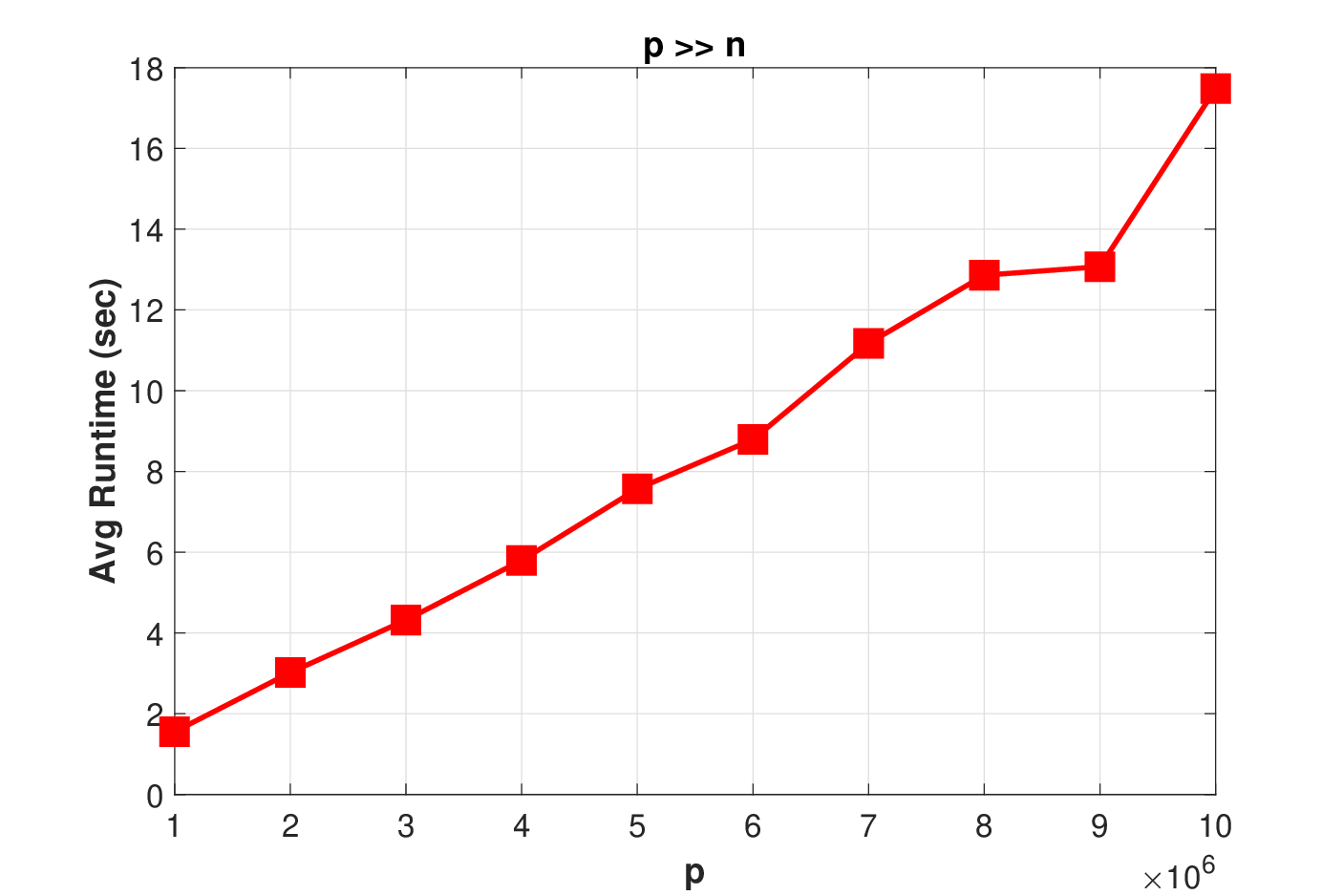}}
}
\caption{Numerical results of Experiment 1.}
\label{fig:ex1} 
\end{figure}

For each combination of parameters, more than the accuracy rate, we also report the average computational time. Figure~\ref{fig:ex1} reports all numerical results across the three regimes. The estimator achieves perfect accuracy in every repetition. The running time is remarkably low even at extreme scales. In the sample-rich regime with $p=100$ and $n$ up to ten million, as well as the high-dimensional regime with $n=100$ and $p$ up to ten million, the algorithm terminates in less than 30 seconds. Classical approaches, such as the expectation-maximization algorithm, become computationally infeasible at this scale. In the balanced regime where $n=p$ grows from one thousand to ten thousand and $K=n/100$ (so the maximum $K$ is 100), the procedure completes in less than 80 seconds. These results confirm that the proposed CSVT estimator is both statistically consistent and computationally practical across all tested settings, including the challenging high dimensional case where the dimension far exceeds the sample size.

\begin{rem}\label{rem:simulated_svd_demo}
Figure~\ref{fig:simulated_svd} illustrates the behaviour of the proposed CSVT estimator for a typical simulation setting with $K=5$.  
Data are generated according to settings in Section \ref{simdatagen} with $n=1000$, $p=20$, $K=5$, and $\beta=1$.  
After centering the data matrix $\mathbf{X}$, we compute the singular values $\sigma_i(\widetilde{\mathbf{X}})$ of $\widetilde{\mathbf{X}}=\mathbf{X}\mathbf{H}$ and the threshold $T=\sqrt{p}+\sqrt{n}+\log n$. This figure shows that the first $K-1=4$ singular values (highlighted in orange) are all well above the red dashed threshold $T$, while the $5$th singular value (blue diamond) lies clearly below $T$. All remaining singular values (grey circles) also stay below $T$. Consequently, the number of singular values exceeding $T$ is $r=4$, and the CSVT estimator returns $\widehat{K}=r+1=5$, which matches the true number of components. This toy example confirms the theoretical guarantee of Theorem~\ref{thm:main}: under the prescribed separation condition, the centred singular value spectrum exhibits a clean gap between the $(K-1)$-th and $K$-th singular values, allowing consistent estimation of $K$.

\begin{figure}[!htbp]
\centering
\includegraphics[width=1.1\textwidth]{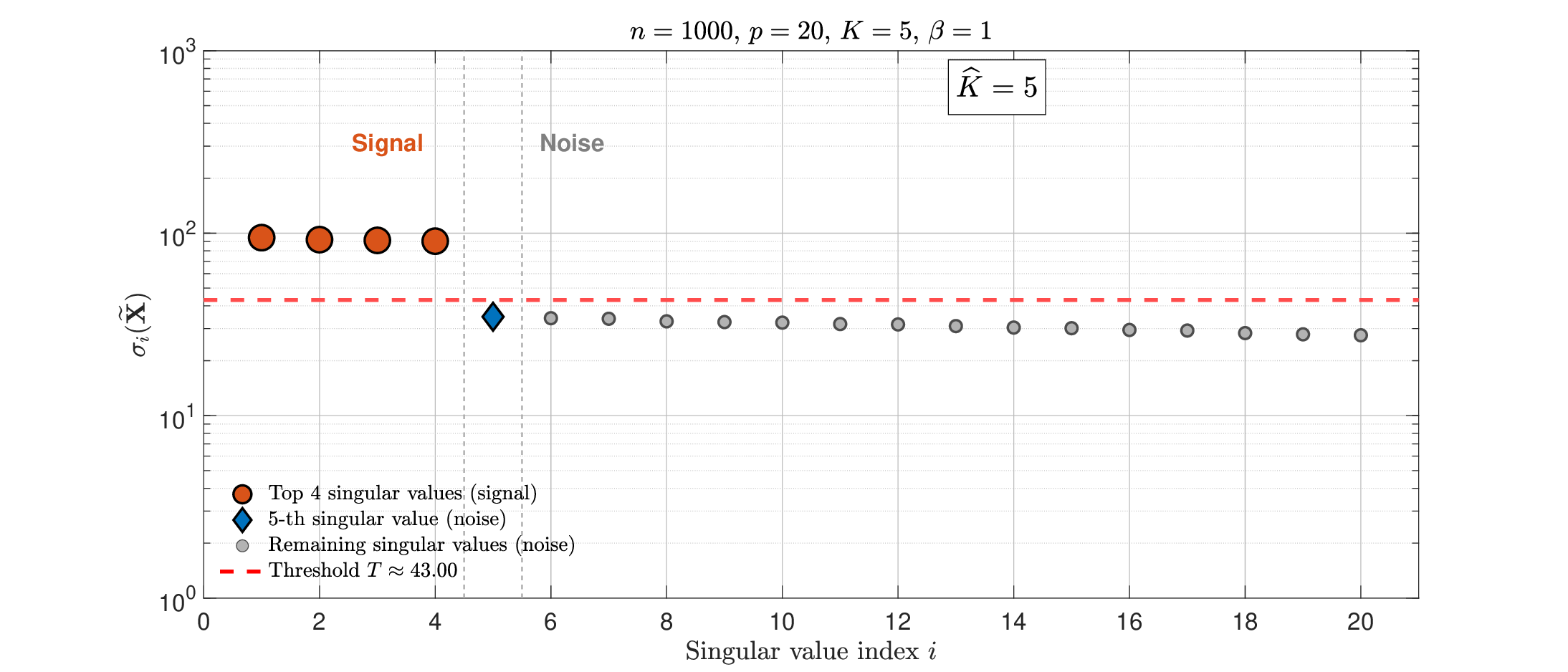}
\caption{Singular values of the centred data matrix $\widetilde{\mathbf{X}}$ for $n=1000$, $p=20$, $K=5$, balanced clusters ($\beta=1$), and $\Delta$ chosen at the theoretical lower bound. 
The first $K-1=4$ singular values (orange circles) exceed the threshold $T$ (red dashed line), while the $5$th singular value (blue diamond) and all subsequent ones lie below $T$, leading to $\widehat{K}=5$.}
\label{fig:simulated_svd}
\end{figure}
\end{rem}
\subsubsection{Growth of the number of components}
We examine how the estimator behaves when the number of clusters \(K\) increases such that \(K\) ranges in $\{1,10,20,\ldots,100\}$ while the total sample size and dimension are held fixed. Two distinct scenarios are considered.

\begin{itemize}
    \item \textbf{Sample‑rich  regime:} Fix \(n = 10^6\), \(p = 100\), and \(\beta = 0.1\). 
    \item \textbf{High‑dimensional  regime:} Fix \(n = 100\), \(p = 10^6\), and \(\beta = 1\).
\end{itemize}
\begin{figure}[!htbp]
\centering
\resizebox{\columnwidth}{!}{
{\includegraphics[width=3\textwidth]{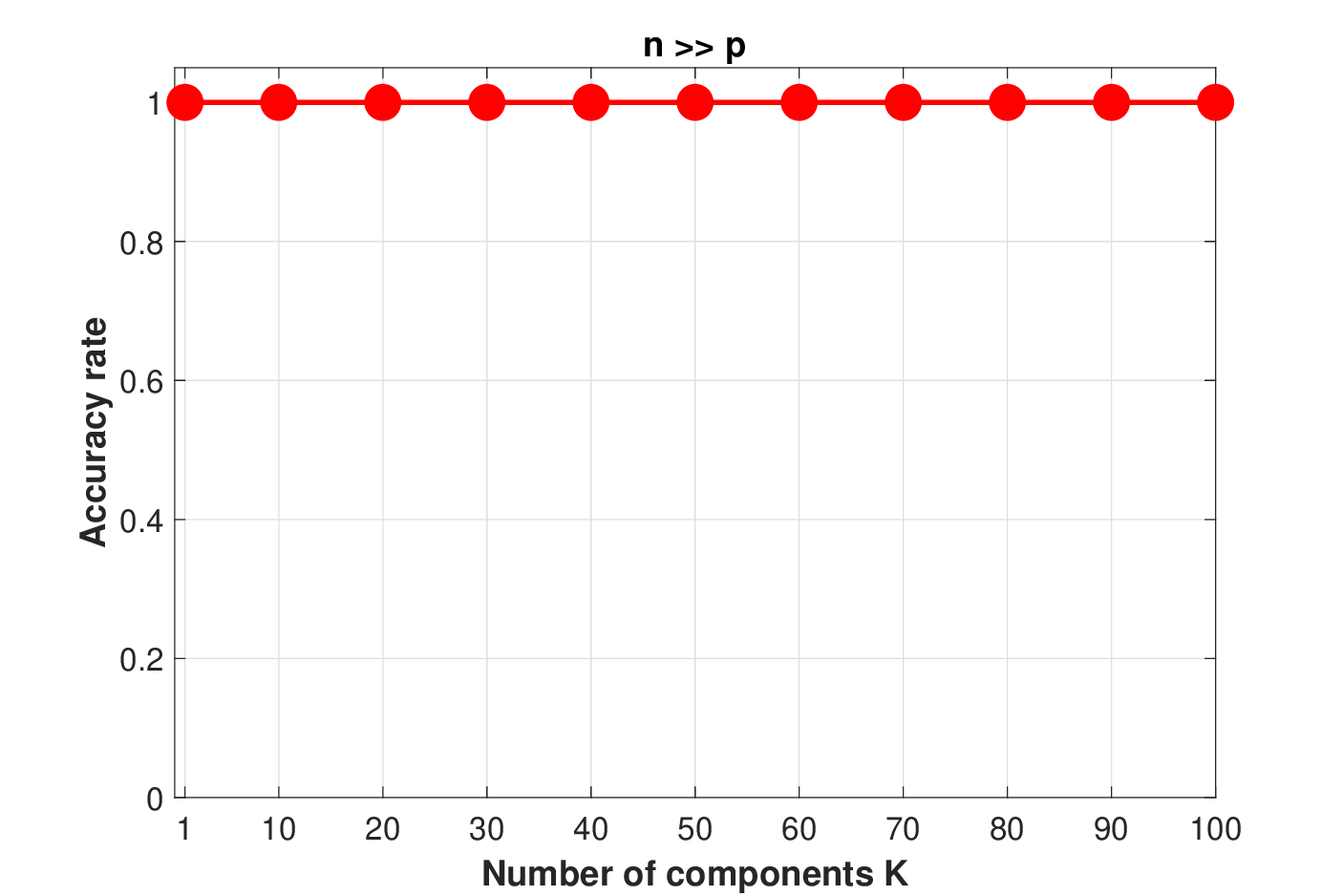}}
{\includegraphics[width=3\textwidth]{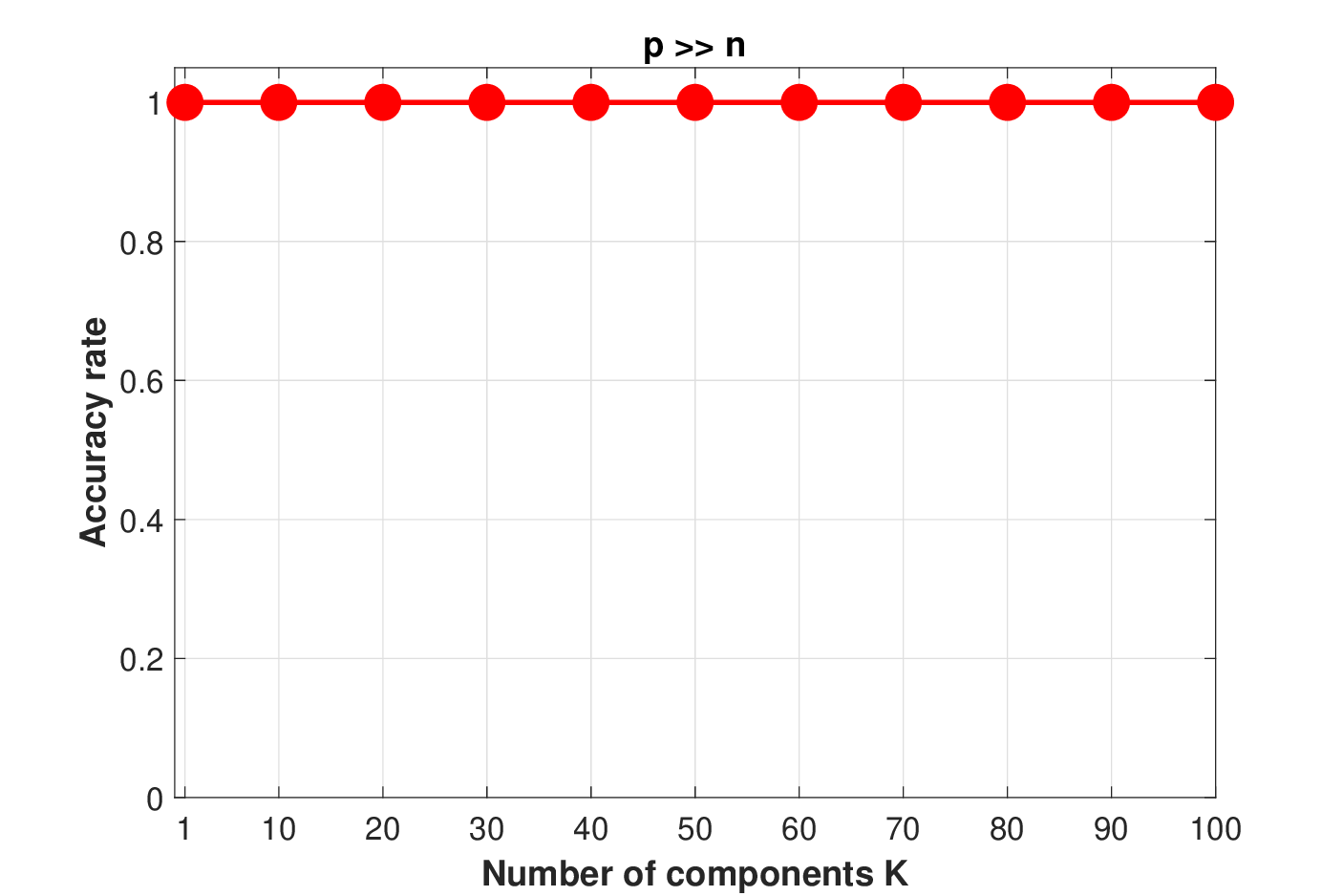}}
}
\caption{Numerical results of Experiment 2.}
\label{fig:ex2} 
\end{figure}

In both scenarios, the maximum value of \(K\) is \(100\), which equals \(\min(p,n)\) in both scenarios, thus approaching the maximal possible value. Figure~\ref{fig:ex2} shows the performance of the CSVT estimator as the number of components $K$ increases from one to one hundred. The estimator recovers the true $K$ perfectly in all cases. This holds for both the sample rich scenario with $n=10^6$ and $p=100$ and the high dimensional scenario with $n=100$ and $p=10^6$. The maximum value $K=100$ reaches the maximal possible value $\min(p,n)$. Even at this boundary, the method continues to work without any loss in accuracy. These results demonstrate that the CSVT estimator reliably handles a large number of clusters up to the fundamental limit imposed by the data matrix dimensions.

\subsubsection{Effect of cluster imbalance}
We investigate the sensitivity of CSVT to severe imbalance among the cluster sizes. Fix \(n = 10^6\), \(p = 200\), and \(K = 50\). Let the imbalance parameter \(\beta\) take values from \(0.001\) to \(0.01\) in steps of \(0.001\), i.e. \(\beta = 0.001, 0.002, \dots, 0.01\). At the most extreme setting \(\beta = 0.001\), the smallest cluster contains only \(n_{\min} = 0.001 \times 10^6 / 50 = 20\) observations.

Figure~\ref{fig:ex3} reports the numerical results of this experiment. The CSVT estimator achieves perfect accuracy across all values of the imbalance parameter $\beta$. Even when the smallest cluster contains only twenty observations, an extremely tiny fraction compared to the total sample size of one million, the method still recovers the true number of components correctly. This demonstrates that the estimator is highly robust to severe imbalance among cluster sizes.
\begin{figure}[!htbp]
\centering
{\includegraphics[width=0.5\textwidth]{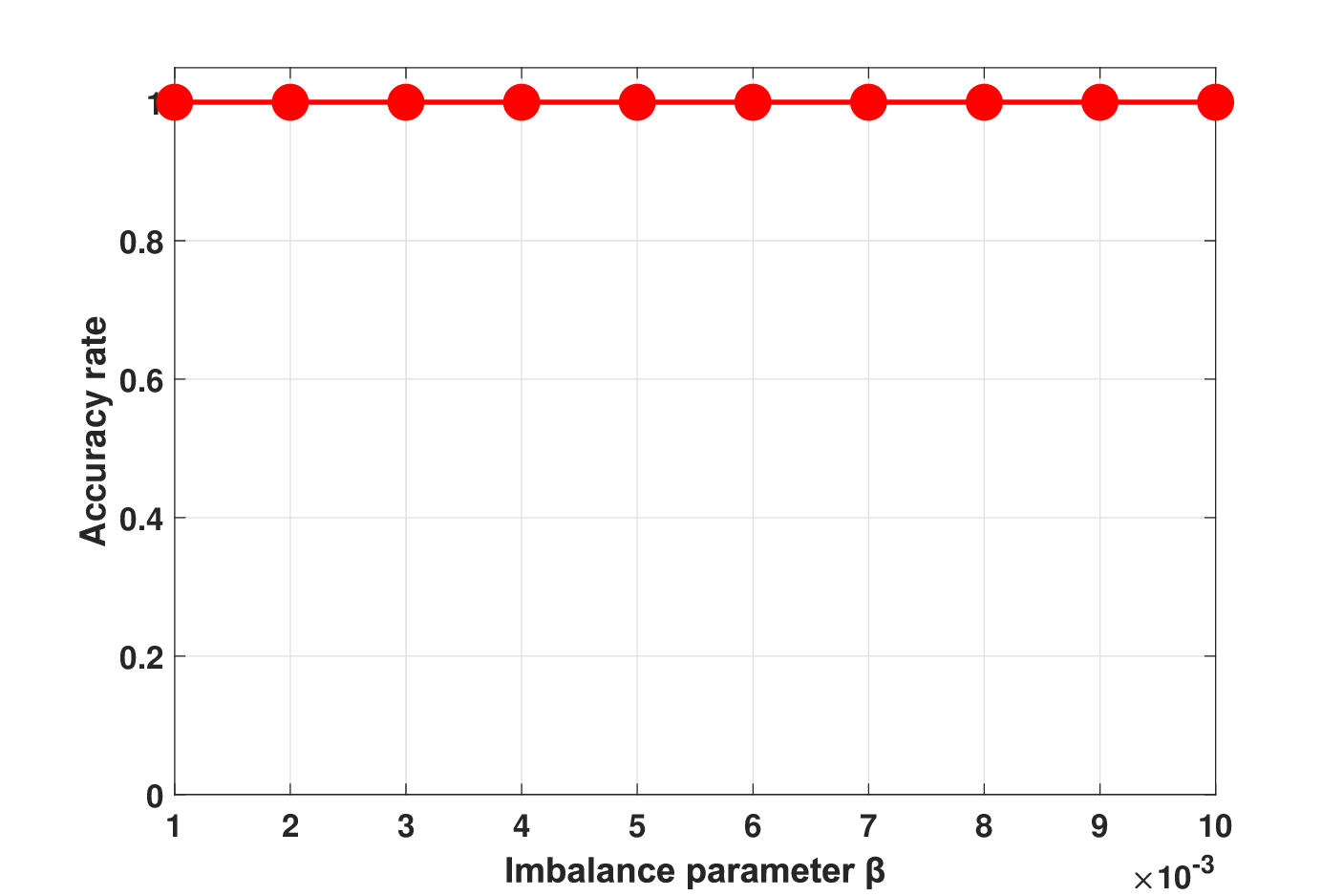}}
\caption{Numerical results of Experiment 3.}
\label{fig:ex3} 
\end{figure}
\subsubsection{Effect of the constant factor in the separation condition}
\label{sec:exp4}

In this experiment, we investigate the behaviour of the CSVT estimator when the minimal centre distance \(\Delta\) is smaller than the theoretical lower bound used in the previous experiments. Instead of fixing the constant to \(2\sqrt{2}\) as in the previous experiments, we introduce a scaling factor \(\gamma > 0\) and set the minimal centre distance to
\[
\Delta = \gamma\cdot2\sqrt{2} \frac{1}{\sqrt{\beta}} \sqrt{\frac{K}{n}} \bigl(\sqrt{p}+\sqrt{n}+\log n\bigr).
\]

We then vary \(\gamma\) over a grid of values, e.g. \(\gamma \in \{0.1, 0.2, \ldots,1\}\), and report the accuracy rate (proportion of runs where \(\widehat{K}=K\)) for each \(\gamma\). Two distinct scenarios are considered as before.

\begin{itemize}
    \item \textbf{Sample‑rich  regime:}   \(n = 10^5\), \(p = 200\), \(K = 200\), \(\beta = 0.01\). 
    \item \textbf{High‑dimensional  regime:} \(n = 200\), \(p = 10^5\), \(K = 10\), \(\beta = 0.5\). 
\end{itemize}

This design allows us to assess the robustness of the CSVT estimator when the theoretical separation condition is violated, and to identify the range of \(\gamma\) for which the estimator remains reliable. Figure~\ref{fig:ex4} shows the accuracy of the CSVT estimator as the separation distance between cluster centers decreases. In both the sample‑rich and high‑dimensional regimes, the estimator perfectly recovers the true number of components for all scaling factors \(\gamma \ge 0.2\). For \(\gamma = 0.1\), which means the actual center distance is one tenth of the theoretical lower bound required by Theorem~\ref{thm:main}, the estimator does not recover \(K\) correctly. These results indicate that the CSVT estimator is robust in practice. It works reliably when the cluster centers are not extremely close, and the theoretical condition provides a safe guideline without being unnecessarily restrictive.
\begin{figure}[!htbp]
\centering
\resizebox{\columnwidth}{!}{
{\includegraphics[width=3\textwidth]{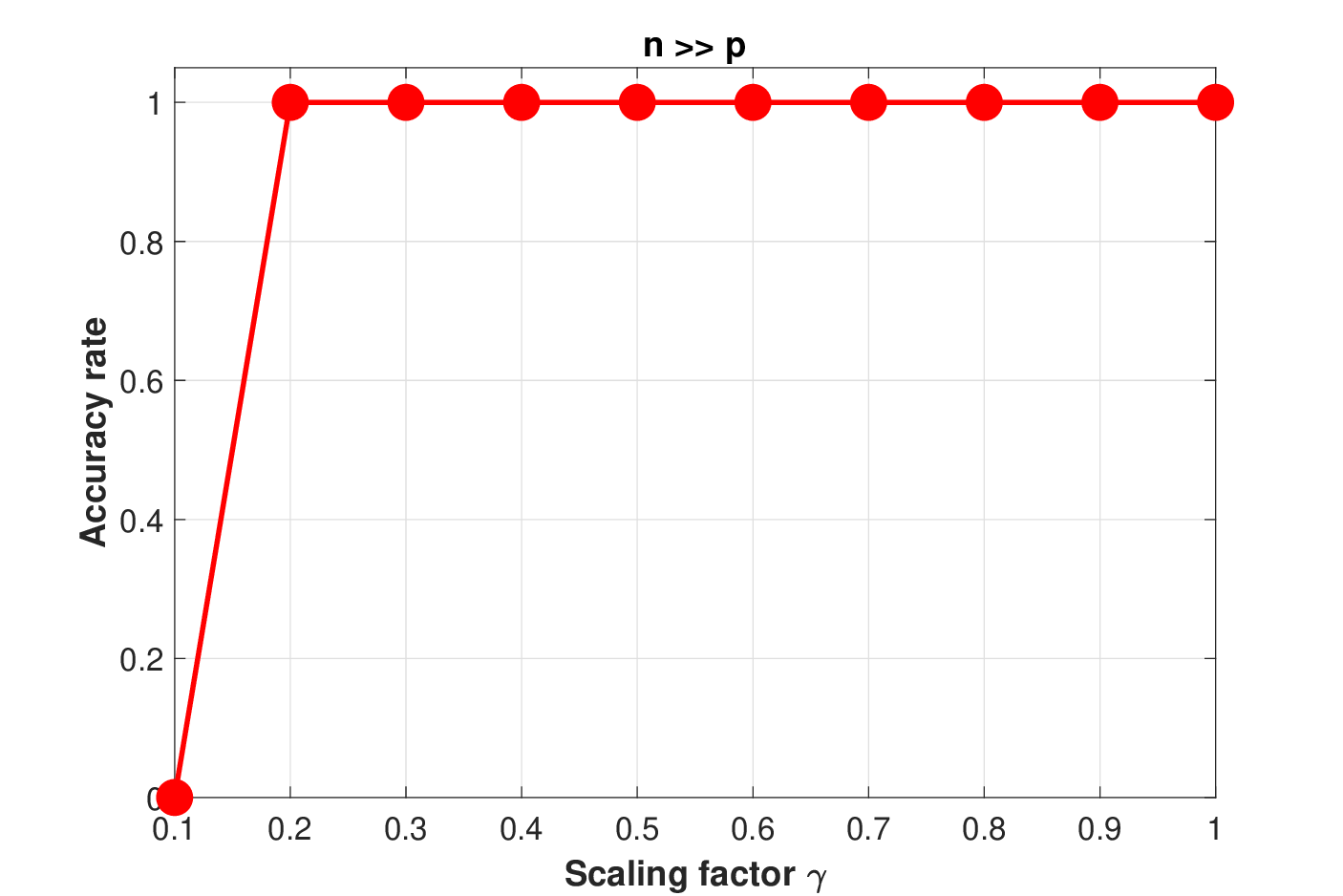}}
{\includegraphics[width=3\textwidth]{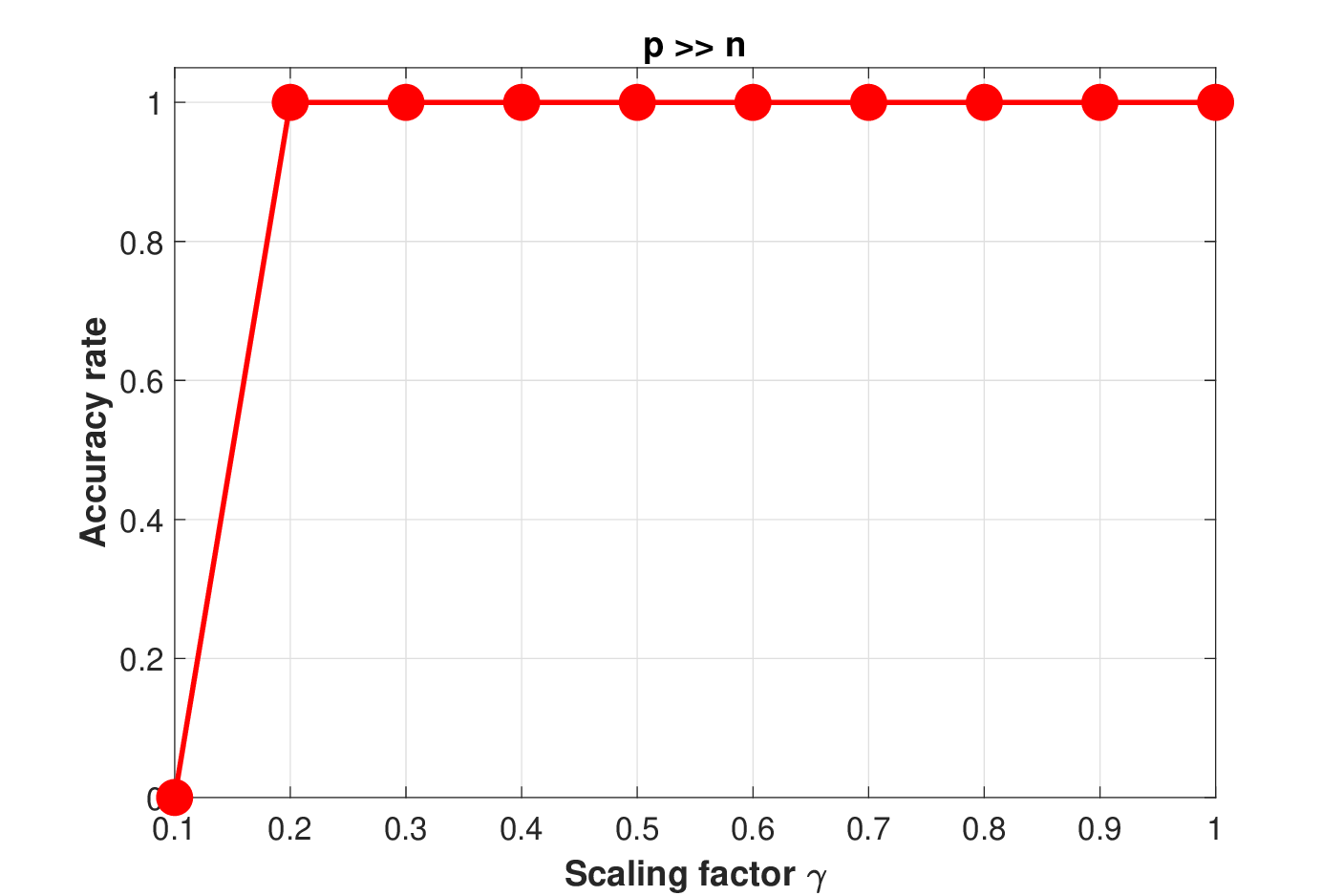}}
}
\caption{Numerical results of Experiment 4.}
\label{fig:ex4} 
\end{figure}
\subsubsection{Performance under heterogeneous noise}
\label{sec:exp5}

All previous simulations assumed the noise matrix $\mathbf{E}$ has i.i.d. $\mathcal{N}(0,1)$ entries, which satisfies the theoretical condition of Theorem~\ref{thm:main}. In practice, the noise may be heteroscedastic: different dimensions can have different variances. To assess the robustness of CSVT beyond its theoretical assumptions, we now consider a more challenging noise structure where the noise condition is violated. In detail, for each independent replication, we generate the noise matrix $\mathbf{E}\in\mathbb{R}^{p\times n}$ with entries  
\[
\mathbf{E}_{ji} \stackrel{\text{i.i.d.}}{\sim} \mathcal{N}\bigl(0,\;\eta_i^2\bigr), \qquad i\in[n],j\in[p]
\]  
where the per‑coordinate variance $\eta_i^2$ is drawn independently from a uniform distribution:  
\[
\eta_i \sim \mathrm{Uniform}\bigl(0.5,\; \eta_{\max}\bigr),\qquad i\in[n].
\]  

The parameter $\eta_{\max}$ controls the degree of heteroscedasticity and we let $\eta_{\max}$ vary from $0.1$ to $1.5$ in steps of $0.1$. For each value of $\eta_{\max}$, the variances $\eta_i^2$ are resampled in every repetition. All other data generation settings (centres, cluster sizes, etc.) remain unchanged from the previous experiments. We design two distinct scenarios that represent the most challenging settings examined earlier:
\begin{itemize}
    \item \textbf{Sample‑rich  regime:}  
    $n = 10^6$, $p = 200$, $K = 200$, $\beta = 0.01$.  

    \item \textbf{High‑dimensional regime:}  
    $n = 200$, $p = 10^6$, $K = 10$, $\beta = 0.5$.
\end{itemize}

For each scenario and each value of $\eta_{\max}$, we generate $100$ independent data sets. The CSVT estimator is applied exactly as described in Algorithm~\ref{alg:CSVT} using the \emph{same} threshold $T = \sqrt{p}+\sqrt{n}+\log n$ (i.e., still assuming unit noise variance). No whitening or pre‑estimation of the noise variances is performed. We record the accuracy rate for each configuration. 

Figure~\ref{fig:ex5} shows how the CSVT estimator performs when the noise variance varies across dimensions. In the sample‑rich regime, the estimator achieves perfect accuracy for all $\eta_{\max} \le 1.4$. However, once $\eta_{\max}$ exceeds $1.4$, the accuracy drops sharply to zero. This indicates that the method can tolerate mild heteroscedasticity but fails when the noise becomes too large. In the high‑dimensional regime (where the dimension is much larger than the sample size), the estimator remains perfectly correct as long as the largest noise variance is no more than $1$. For $\eta_{\max} > 1$, the accuracy immediately falls to zero.

These results demonstrate that the CSVT estimator handles small-to-moderate differences in noise variance well, despite the theoretical assumption of homoscedasticity. However, there exists a clear threshold beyond which the estimator breaks down, especially in high dimensions where the noise level is amplified. The findings highlight the method's practical robustness within a reasonable range of noise heterogeneity, while also revealing its limits under extreme heteroscedasticity.
\begin{figure}[!htbp]
\centering
\resizebox{\columnwidth}{!}{
{\includegraphics[width=3\textwidth]{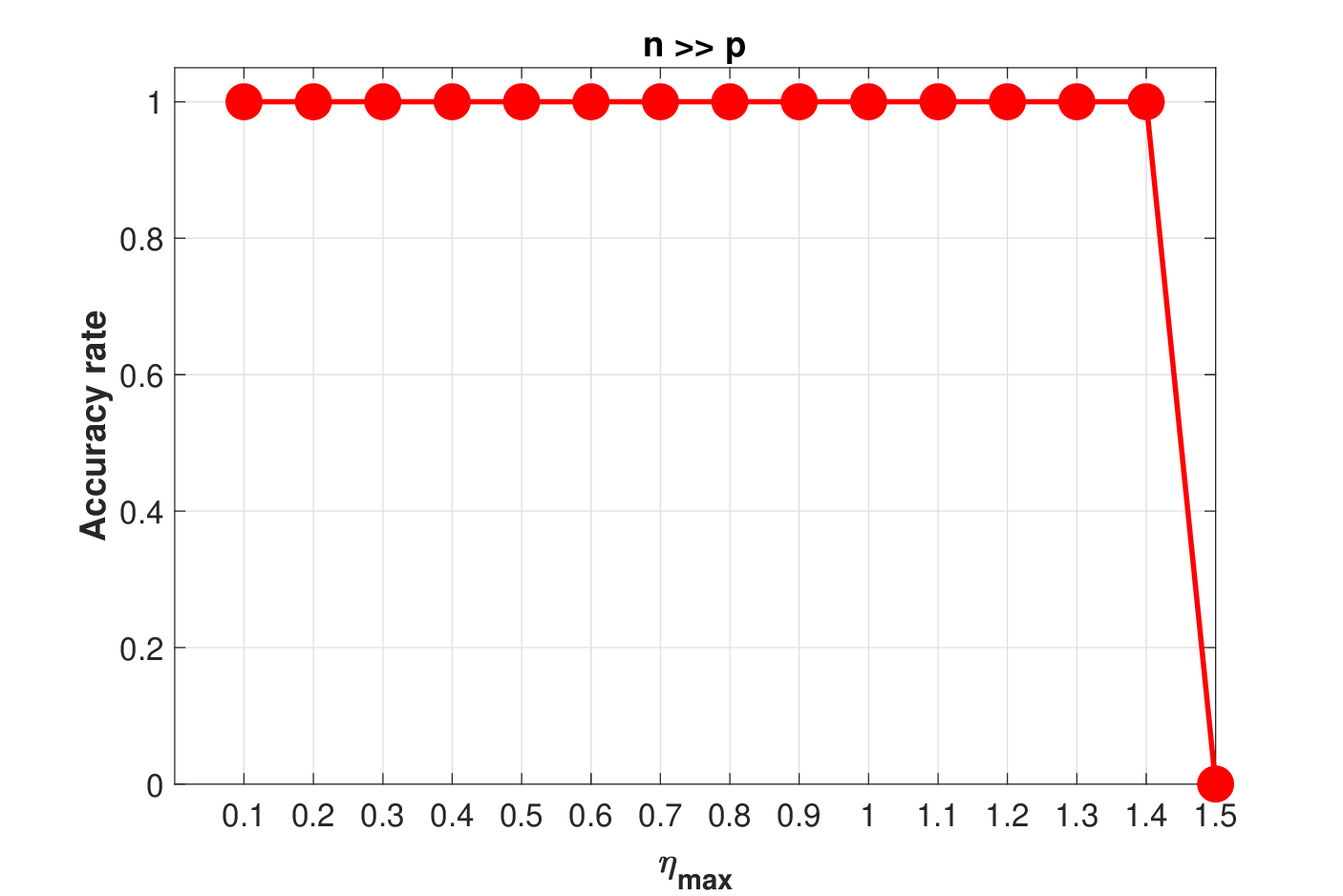}}
{\includegraphics[width=3\textwidth]{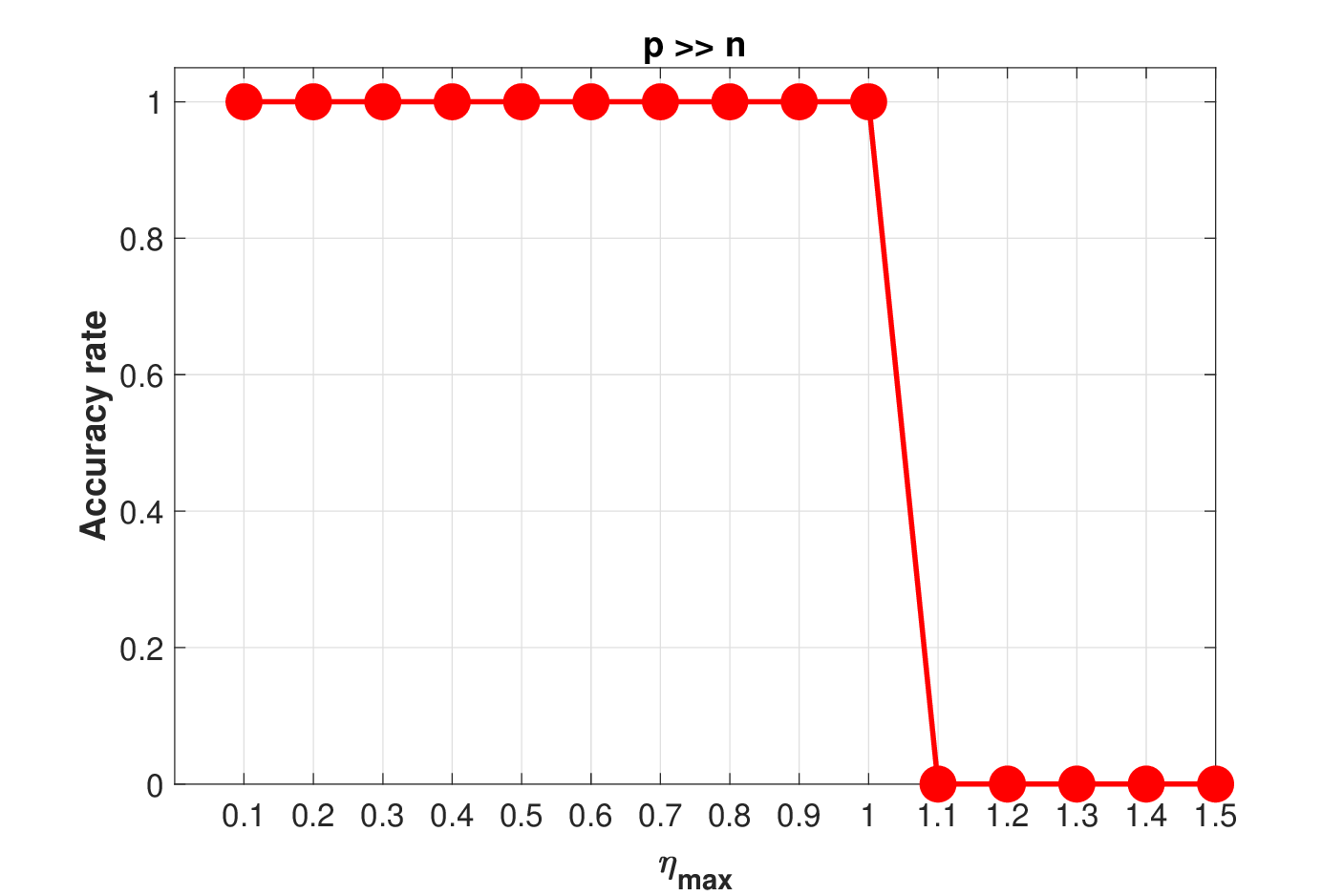}}
}
\caption{Numerical results of Experiment 5.}
\label{fig:ex5} 
\end{figure}
\section{Real data analysis}
\label{sec:realdata}

This section evaluates the practical performance of the CSVT estimator on four real-world datasets that are common benchmarks in clustering literature. For each data, the true number of underlying groups \(K\) is known from domain knowledge. Our goal is to assess whether the proposed CSVT estimator can recover this true \(K\) from the data alone. The four datasets are briefly described as follows.

\begin{itemize}
    \item Iris dataset \citep{fisher1936use}: 
    This dataset contains \(n = 150\) samples from three species of iris flowers: setosa, versicolor, and virginica. Each sample is characterized by \(p = 4\) continuous measurements: sepal length, sepal width, petal length, and petal width. The true number of clusters is \(K = 3\), corresponding to the three species.

    \item Crab dataset \citep{campbell1974multivariate}: 
    This dataset consists of \(n = 200\) rock crab specimens of the genus \textit{Leptograpsus}. Each specimen is described by \(p = 5\) morphological measurements. In the original data, the crabs have one of two color forms and differ in sex, leading to four pre-defined groups. However, in this study, we follow the common practice of ignoring the color form and focus solely on sex differentiation. Consequently, the true number of clusters is set to \(K = 2\), corresponding to male and female crabs. 

    \item USPS handwritten digit database \citep{hull2002database}: 
    This dataset contains \(n = 7291\) grayscale images of handwritten digits from 0 to 9. Each image is of size \(16 \times 16\) pixels, which is vectorized to a \(p = 256\)-dimensional vector of pixel intensities. The true number of clusters is \(K = 10\), corresponding to the ten digit classes.
\item Poker dataset \citep{poker_hand_158}: This dataset is sourced from the UCI Poker Hand dataset . It contains \(n = 25010\) samples. Each sample describes a hand of five playing cards drawn from a standard deck of 52. Each card is represented by two attributes (suit and rank), resulting in \(p = 10\) predictive attributes. The order of cards in the hand is important. The true number of clusters is \(K = 10\), corresponding to the ten distinct poker hand types.
\end{itemize}

For each dataset, we apply the CSVT estimator and report the estimated number of components \(\widehat{K}\) and the running time. Table~\ref{tab:realdata_results} summarizes the characteristics of each dataset along with the estimated \(\widehat{K}\) and the computation time. The results are discussed below.

\begin{table}[htbp]
\centering
\caption{Dataset characteristics, CSVT estimation results, and average running time over 1000 runs for each data.}
\label{tab:realdata_results}
\begin{tabular}{lcccc c}
\toprule
Dataset & \(n\) & \(p\) & True \(K\) & Estimated \(\widehat{K}\) & Time (s) \\
\midrule
Iris     & 150   & 4   & 3 & 2 & 0.000026 \\
Crab     & 200   & 5   & 2 & 2 & 0.000037 \\
USPS     & 7291  & 256 & 10 & 10 & 0.062435 \\
Poker    & 25010 & 10  & 10 & 10 & 0.005061 \\
\bottomrule
\end{tabular}
\end{table}

\begin{itemize}
\item For the Iris data, CSVT estimates \(\widehat{K} = 2\). This result does not contradict the data but instead reflects the actual geometry of the feature space. The setosa class is well separated from the other two, while the versicolor and virginica classes overlap substantially. Under a spectral thresholding criterion that relies on centered singular values, the data support only two effective clusters. One cluster captures the isolated setosa group, and the other represents the combined mass of the two overlapping classes. Hence \(\widehat{K}=2\) is a faithful reflection of the data geometry rather than an estimation error. CSVT's average running time over 1000 runs for Iris is 0.000026 seconds.
\item For the Crab data, CSVT estimates \(\widehat{K} = 2\). The two sexes are well separated in the measurement space, and the CVST estimator recovers the true number without difficulty. CSVT's average running time over 1000 runs for Crab is 0.000037 seconds.
\item For the USPS data, CSVT estimates \(\widehat{K} = 10\). This outcome is notable because the dataset has a large sample size (\(n = 7291\)) relative to the feature dimension (\(p = 256\)), and the pixel intensities do not satisfy any simple distributional assumption. Nevertheless, the ten digit classes are sufficiently distinct in the centered singular value spectrum to be counted correctly. The estimator handles the large number of clusters and the large sample size with ease. CSVT's average running time over 1000 runs  for this data is 0.062435 seconds.
\item For the Poker data, CSVT estimates \(\widehat{K} = 10\), which matches the true number of poker hand types. Notably, for this dataset, the feature dimension equals the number of clusters (\(p = K = 10\)). The estimator successfully identifies the ten distinct hand classes from the 10-dimensional feature space. The average running time of CSVT over 1000 runs is 0.005061 seconds for this data.
\end{itemize}

These results demonstrate that the proposed CSVT estimator performs well on real-world data across a variety of sample sizes, feature dimensions, and numbers of clusters. In all four datasets, the estimated \(\widehat{K}\) either matches the true number of clusters or provides a meaningful reflection of the underlying data geometry. Moreover, the computational cost is very low: even for the largest dataset (USPS, \(n=7291, p=256\)), the average running time over 1000 runs is only about 0.06 seconds.
\section{Conclusion}\label{sec:conclusion}
This paper provides a simple and practical tool for estimating the number of components in a Gaussian mixture model. The method has a rigorous theoretical guarantee. It works under remarkably mild conditions and is valid for any dimension and sample size, including the high dimensional setting where the dimension far exceeds the sample size. The true number of clusters can be large and is allowed to grow with the smaller of the dimension and the sample size. The required separation between the cluster centers is very weak. The algorithm is extremely fast, requiring only a single centering operation and one singular value decomposition. The proof is self contained and uses only elementary matrix inequalities and standard concentration bounds. This work contributes to the field of high dimensional unsupervised learning by offering a computationally efficient and theoretically sound solution to a fundamental model selection problem.

Several directions remain for future work. A fundamental open problem is whether the separation condition in Theorem \ref{thm:main} is also necessary: does there exist a constant \(c>0\) such that whenever \(\Delta \le c\frac{\kappa}{\sqrt{\beta}}\sqrt{\frac{K}{n}}(\sqrt{p}+\sqrt{n})\), no estimator can consistently recover the true number of components \(K\)? Resolving this minimax lower bound would establish the optimality of CSVT, but it remains a challenging open problem due to the difficulty of constructing statistically indistinguishable mixtures with \(K\) and \(K+1\) components under small separation. One natural extension is to relax the assumption that the covariance matrix is the identity. If the covariance is known to be isotropic but unknown, the noise level must be estimated from the data, but establishing theoretical guarantees would require new concentration arguments. Another direction is to consider a common but unknown full covariance matrix. One could apply a whitening step based on a preliminary covariance estimate before centering and thresholding. The key challenge is to control how the estimation error in the whitening transformation affects the rank of the signal. A fourth avenue is to move beyond Gaussian noise. The current analysis relies on concentration bounds for Gaussian random matrices. It would be valuable to examine whether the same centering and thresholding idea remains effective for other noise distributions with subgaussian tails, where similar concentration inequalities are available. Finally, beyond the current model, if consistent estimation of \(K\) is established for more complex GMMs (e.g., unknown or non‑spherical covariances), the resulting separation condition will inevitably involve the same key model parameter \(\Delta\). Studying the minimax lower bound for \(\Delta\) in such extended settings is therefore equally challenging and promising.

\bmhead{Acknowledgements} Qing's work was sponsored by the Scientific Research Foundation of Chongqing University of Technology (Grant No. 2024ZDR003), and the Science and Technology Research Program of Chongqing Municipal Education Commission (Grant No. KJQN202401168).
\bmhead{Author Contributions} Huan Qing: Conceptualization, Methodology, Investigation, Software, Formal analysis, Data curation, Writing-original draft, Writing-reviewing \& editing, Funding acquisition.
\section*{Declarations}
\textbf{Conflict of interest} The author declares no conflict of interest.
\begin{appendices}
\section{Technical proofs}\label{app:proofs}
\subsection{Proof of Lemma \ref{lem:signal}}
\begin{proof}
\textbf{Step 1. Decomposition.}
Write $\mathbf{P}=\mathbf{M}\mathbf{Z}^\top$ with $\mathbf{M}\in\mathbb{R}^{p\times K}$ (full column rank) and $\mathbf{Z}\in\{0,1\}^{n\times K}$ the membership matrix. Then
\begin{align*}
\widetilde{\mathbf{P}} = \mathbf{P}\mathbf{H} = \mathbf{M}\mathbf{Z}^\top\mathbf{H} \equiv \mathbf{M}\mathbf{A},
\qquad \mathbf{A}:=\mathbf{Z}^\top\mathbf{H}\in\mathbb{R}^{K\times n}.
\end{align*}

\textbf{Step 2. Lower bound via $\sigma_K(\mathbf{M})$ and $\sigma_{K-1}(\mathbf{A})$.}
Since $\mathbf{M}$ has rank $K$ and $\operatorname{rank}(\mathbf{A})=K-1$ (affine independence plus centering), for any $i\le\operatorname{rank}(\mathbf{A})$, we have
\begin{align*}
\sigma_i(\mathbf{M}\mathbf{A}) \ge \sigma_{\min}(\mathbf{M})\,\sigma_i(\mathbf{A}) = \sigma_K(\mathbf{M})\,\sigma_i(\mathbf{A}).
\end{align*}

Taking $i=K-1$ gives
\begin{align}
\sigma_{K-1}(\widetilde{\mathbf{P}}) \ge \sigma_K(\mathbf{M})\;\sigma_{K-1}(\mathbf{A}). \label{eq:step2}
\end{align}

\textbf{Step 3. Compute $\mathbf{A}\mathbf{A}^\top$ and the correct orthogonality constraint.}
\begin{align*}
\mathbf{A}\mathbf{A}^\top = \mathbf{Z}^\top\mathbf{H}\mathbf{Z}
= \mathbf{Z}^\top(\mathbf{I}_n-\tfrac{1}{n}\mathbf{1}_n\mathbf{1}_n^\top)\mathbf{Z}
= \mathbf{D} - \tfrac{1}{n}\mathbf{n}\mathbf{n}^\top,
\end{align*}
where $\mathbf{D}=\operatorname{diag}(n_1,\dots,n_K)$ and $\mathbf{n}=(n_1,\dots,n_K)^\top$.
One checks $\mathbf{A}\mathbf{A}^\top\mathbf{1}_K=\mathbf{0}$, so $\mathbf{1}_K$ is the zero eigenvector and $\operatorname{rank}(\mathbf{A}\mathbf{A}^\top)=K-1$.
By the Courant–Fischer theorem, we have
\begin{align}
\sigma_{K-1}^2(\mathbf{A}) = \min_{\substack{\mathbf{x}\in\mathbb{R}^K\\ \mathbf{x}^\top\mathbf{1}_K=0,\;\|\mathbf{x}\|=1}}
\mathbf{x}^\top\!\left(\mathbf{D}-\tfrac{1}{n}\mathbf{n}\mathbf{n}^\top\right)\!\mathbf{x}. \label{eq:step3}
\end{align}

\textbf{Step 4. Prove $\sigma_{K-1}(\mathbf{A}) \ge \sqrt{\min_k n_k}$.}
Let $m=\min_k n_k$ and write $n_k=m+d_k$ with $d_k\ge0$, so $\sum_k d_k = n-Km$.
For any $\mathbf{x}$ with $\mathbf{x}^\top\mathbf{1}_K=0$, $\|\mathbf{x}\|=1$, we have
\begin{align*}
\mathbf{x}^\top\!\left(\mathbf{D}-\tfrac{1}{n}\mathbf{n}\mathbf{n}^\top\right)\!\mathbf{x}
&= \sum_{k=1}^K n_k x_k^2 - \frac{1}{n}\Bigl(\sum_{k=1}^K n_k x_k\Bigr)^2.
\end{align*}

Because $\sum_k x_k=0$, we have
\begin{align*}
\sum_{k=1}^K n_k x_k = \sum_{k=1}^K (m+d_k)x_k = \sum_k d_k x_k.
\end{align*}

Hence, by Cauchy–Schwarz inequality, we have
\begin{align}
\Bigl(\sum_k d_k x_k\Bigr)^2 \le \Bigl(\sum_k d_k\Bigr)\Bigl(\sum_k d_k x_k^2\Bigr)
= (n-Km)\sum_k d_k x_k^2. \label{eq:cs}
\end{align}

We also have
\begin{align*}
\sum_k n_k x_k^2 = \sum_k (m+d_k)x_k^2 = m + \sum_k d_k x_k^2.
\end{align*}

Therefore, we get
\begin{align*}
\mathbf{x}^\top\!\left(\mathbf{D}-\tfrac{1}{n}\mathbf{n}\mathbf{n}^\top\right)\!\mathbf{x}
&= m + \sum_k d_k x_k^2 - \frac{1}{n}\Bigl(\sum_k d_k x_k\Bigr)^2 \\
&\ge m + \sum_k d_k x_k^2 - \frac{1}{n}(n-Km)\sum_k d_k x_k^2 \quad\text{(by Equation (\ref{eq:cs}))}\\
&\ge m + \sum_k d_k x_k^2 - \sum_k d_k x_k^2 = m,
\end{align*}
where we used $\frac{1}{n}(n-Km)\le 1$. Since this holds for every admissible $\mathbf{x}$, Equation (\ref{eq:step3}) yields
\begin{align}
\sigma_{K-1}^2(\mathbf{A}) \ge m \quad\Longrightarrow\quad
\sigma_{K-1}(\mathbf{A}) \ge \sqrt{m} = \sqrt{\min_k n_k}. \label{eq:step4}
\end{align}

\textbf{Step 5. Lower bound $\sigma_K(\mathbf{M}) \ge \Delta/(2\kappa)$.}
From the definition $\kappa = \sigma_1(\mathbf{M})/\sigma_K(\mathbf{M})$, we have $\sigma_K(\mathbf{M}) = \sigma_1(\mathbf{M})/\kappa$.
For any two distinct centres, we have
\begin{align*}
\Delta \le \|\boldsymbol{\mu}_i-\boldsymbol{\mu}_j\|
= \|\mathbf{M}(\mathbf{e}_i-\mathbf{e}_j)\|
\le \|\mathbf{M}\|\,\|\mathbf{e}_i-\mathbf{e}_j\|
= \sigma_1(\mathbf{M})\sqrt{2}.
\end{align*}

Thus, we get $\sigma_1(\mathbf{M}) \ge \Delta/\sqrt{2}$, and consequently we obtain
\begin{align}
\sigma_K(\mathbf{M}) \ge \frac{\Delta/\sqrt{2}}{\kappa}. \label{eq:step5}
\end{align}

\textbf{Step 6. Combine.}
Inserting Equations (\ref{eq:step4}) and (\ref{eq:step5}) into Equation (\ref{eq:step2}) gives
\begin{align*}
\sigma_{K-1}(\widetilde{\mathbf{P}})
\ge \frac{\Delta}{\sqrt{2}\kappa}\,\sqrt{\min_k n_k}
= \frac{\Delta}{\kappa}\,\sqrt{\frac{\beta n}{2K}}.
\end{align*}

This completes the proof.
\end{proof}
\subsection{Proof of Theorem \ref{thm:main}}
\begin{proof}
We treat the two cases separately.

\noindent\textbf{Case $K=1$.} 
Then $\widetilde{\mathbf{P}}=\mathbf{0}$, so $\widetilde{\mathbf{X}}=\widetilde{\mathbf{E}}$. We need the following technical result to bound \(\|\widetilde{\mathbf{E}}\|\).
\begin{lem}\label{lem:noise}
Let $\mathbf{E}\in\mathbb{R}^{p\times n}$ have i.i.d. $\mathcal{N}(0,1)$ entries. For any $t\ge 0$, we have
\begin{align*}
    \mathbb{P}\bigl(\|\mathbf{E}\| \ge \sqrt{p}+\sqrt{n}+t\bigr) \le e^{-t^2/2}.
\end{align*}
\end{lem}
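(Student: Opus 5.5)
The statement to prove is Lemma~\ref{lem:noise}, the Gaussian concentration bound for $\|\mathbf{E}\|$. I would prove it by combining an expectation bound with Lipschitz concentration of Gaussian measure.

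\textbf{Step 1: Lipschitz reduction.} Identify $\mathbf{E}$ with a standard Gaussian vector in $\mathbb{R}^{pn}$ and consider the map $f(\mathbf{E})=\|\mathbf{E}\|$. It is $1$-Lipschitz with respect to the Euclidean (Frobenius) norm, since
\[
\bigl|\|\mathbf{A}\|-\|\mathbf{B}\|\bigr|\le\|\mathbf{A}-\mathbf{B}\|\le\|\mathbf{A}-\mathbf{B}\|_F .
\]
The Gaussian concentration inequality for $1$-Lipschitz functions (Tsirelson--Ibragimov--Sudakov, or the Borell--TIS inequality) then gives
\[
\mathbb{P}\bigl(f(\mathbf{E})\ge \mathbb{E}f(\mathbf{E})+t\bigr)\le e^{-t^2/2}\qquad\text{for all } t\ge 0.
\]

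\textbf{Step 2: expectation bound.} It remains to show $\mathbb{E}\|\mathbf{E}\|\le\sqrt{p}+\sqrt{n}$. Write
\[
\|\mathbf{E}\|=\max_{\mathbf{u}\in S^{p-1},\,\mathbf{v}\in S^{n-1}} X_{\mathbf{u},\mathbf{v}},\qquad X_{\mathbf{u},\mathbf{v}}=\mathbf{u}^\top\mathbf{E}\mathbf{v}.
\]
Compare this process with $Y_{\mathbf{u},\mathbf{v}}=\mathbf{g}^\top\mathbf{u}+\mathbf{h}^\top\mathbf{v}$, where $\mathbf{g}\sim\mathcal{N}(\mathbf{0},\mathbf{I}_p)$ and $\mathbf{h}\sim\mathcal{N}(\mathbf{0},\mathbf{I}_n)$ are independent. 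For unit vectors, a direct computation gives
\[
\mathbb{E}(X_{\mathbf{u},\mathbf{v}}-X_{\mathbf{u}',\mathbf{v}'})^2=2-2\langle\mathbf{u},\mathbf{u}'\rangle\langle\mathbf{v},\mathbf{v}'\rangle,
\]
\[
\mathbb{E}(Y_{\mathbf{u},\mathbf{v}}-Y_{\mathbf{u}',\mathbf{v}'})^2=4-2\langle\mathbf{u},\mathbf{u}'\rangle-2\langle\mathbf{v},\mathbf{v}'\rangle.
\]
Setting $a=\langle\mathbf{u},\mathbf{u}'\rangle$ and $b=\langle\mathbf{v},\mathbf{v}'\rangle$ with $a,b\in[-1,1]$, the difference of the two increments is
\[
(4-2a-2b)-(2-2ab)=2(1-a)(1-b)\ge 0 .
\]
So the $Y$-increments dominate the $X$-increments, and the Sudakov--Fernique inequality yields
\[
\mathbb{E}\|\mathbf{E}\|\le\mathbb{E}\max_{\mathbf{u},\mathbf{v}}Y_{\mathbf{u},\mathbf{v}}=\mathbb{E}\|\mathbf{g}\|+\mathbb{E}\|\mathbf{h}\|\le\sqrt{p}+\sqrt{n},
\]
where the last step is Jensen's inequality. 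This is Gordon's bound.

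\textbf{Conclusion and main obstacle.} Combining Steps 1 and 2 gives the stated tail bound $e^{-t^2/2}$. The main obstacle is the increment comparison in Step 2. Sudakov--Fernique is applied over the infinite index set $S^{p-1}\times S^{n-1}$, which is justified by passing through finite subsets and using continuity of the process. Alternatively, since this is a standard result (Davidson--Szarek; Vershynin's textbook), one may simply cite it.

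For the Case $K=1$ of the main theorem, I would then apply the lemma with $t=t_n\to\infty$. This gives $\|\widetilde{\mathbf{E}}\|\le\|\mathbf{E}\|\le T$ with probability at least $1-e^{-t_n^2/2}\to 1$, hence $r=0$ and $\widehat{K}=1$.
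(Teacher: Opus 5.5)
Your proof is correct, but it takes a different route from the paper, which gives no argument and simply cites Lemma B.1 of L\"offler, Zhang and Zhou (2021). You instead derive this classical Davidson--Szarek bound from two standard tools:
\begin{itemize}
\item Gaussian concentration of the $1$-Lipschitz map $\mathbf{E}\mapsto\|\mathbf{E}\|$ about its mean, which gives the factor $e^{-t^2/2}$;
\item Gordon's bound $\mathbb{E}\|\mathbf{E}\|\le\sqrt{p}+\sqrt{n}$ via Sudakov--Fernique, with the increment comparison $(4-2a-2b)-(2-2ab)=2(1-a)(1-b)\ge 0$ checked correctly.
\end{itemize}

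The citation buys brevity. Your derivation buys self-containedness and shows why the comparison step cannot be replaced by a cruder $\varepsilon$-net argument. A standard net argument only yields $\|\mathbf{E}\|\le C(\sqrt{p}+\sqrt{n})+t$ with some constant $C>1$. That is useless for the paper's threshold $T=\sqrt{p}+\sqrt{n}+t_n$, because the slack $t_n=o(\sqrt{p}+\sqrt{n})$ cannot absorb the extra $(C-1)(\sqrt{p}+\sqrt{n})$. Your route also makes explicit that both ingredients are specifically Gaussian. This matters for the sub-Gaussian extension mentioned in the conclusion, where the standard non-asymptotic bounds carry an unspecified constant in front of $\sqrt{p}+\sqrt{n}$.

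Your closing application to $K=1$ is fine and slightly leaner than the paper's. Since $r$ counts strict exceedances of $T$, the bound $\|\widetilde{\mathbf{E}}\|\le T$ already gives $r=0$, without the paper's argument about ties $\widehat{\sigma}_i=T$.
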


\begin{proof}
The inequality follows directly from Lemma B.1 of \citep{loffler2021optimality}.
\end{proof}

By Lemma~\ref{lem:noise}, for the chosen $t_n$, we have
\[
\mathbb{P}\bigl(\|\mathbf{E}\| \ge d+t_n\bigr) \le e^{-t_n^2/2} \to 0.
\]

Define the event $\mathcal{A} = \{\|\mathbf{E}\| \le d+t_n\}$. Then $\mathbb{P}(\mathcal{A})\to 1$ and on $\mathcal{A}$, we have
\[
\|\widetilde{\mathbf{E}}\| \le \|\mathbf{E}\| \le d+t_n = T.
\]

Hence, all singular values of $\widetilde{\mathbf{X}}$ satisfy $\widehat{\sigma}_i \le T$ on $\mathcal{A}$. 
Because $\widetilde{\mathbf{X}}=\widetilde{\mathbf{E}}$ and $\mathbf{E}$ has an absolutely continuous distribution, $\mathbb{P}(\widehat{\sigma}_i = T)=0$ for each $i$. Since there are only $\min(p,n)$ singular values, the union bound gives
\[
\mathbb{P}(\exists i: \widehat{\sigma}_i = T) \le \sum_{i=1}^{\min(p,n)} \mathbb{P}(\widehat{\sigma}_i = T) = 0.
\]

Therefore, we get
\[
\mathbb{P}(\widehat{\sigma}_i < T\ \forall i) \ge \mathbb{P}(\mathcal{A}) - \mathbb{P}(\exists i: \widehat{\sigma}_i = T) = \mathbb{P}(\mathcal{A}) \to 1,
\]
which gives $r=0$ and the algorithm outputs $\widehat{K}=1$ with probability tending to one.

\noindent\textbf{Case $K\ge2$.} 
From Lemma~\ref{lem:signal}, we obtain
\[
\sigma_{K-1}(\widetilde{\mathbf{P}}) \ge \frac{\Delta}{\kappa}\sqrt{\frac{\beta n}{2K}}.
\]

Insert the lower bound on $\Delta$ from the separation condition, we get
\[
\sigma_{K-1}(\widetilde{\mathbf{P}}) \ge \frac{1}{\kappa}\sqrt{\frac{\beta n}{2K}}\cdot
2\sqrt{2}\,\frac{\kappa}{\sqrt{\beta}}\sqrt{\frac{K}{n}}\,T
= 2T.
\]

Now restrict to the same event $\mathcal{A}$ (which holds with probability $\to 1$). On $\mathcal{A}$, we have $\|\widetilde{\mathbf{E}}\| \le T$. Applying Weyl's inequality to $\widetilde{\mathbf{X}} = \widetilde{\mathbf{P}}+\widetilde{\mathbf{E}}$ gives
\[
\widehat{\sigma}_{K-1} \ge \sigma_{K-1}(\widetilde{\mathbf{P}}) - \|\widetilde{\mathbf{E}}\|
\ge 2T - T = T,
\]
\[
\widehat{\sigma}_{K} \le \sigma_{K}(\widetilde{\mathbf{P}}) + \|\widetilde{\mathbf{E}}\|
= 0 + \|\widetilde{\mathbf{E}}\| \le T.
\]

Because the singular values of $\widetilde{\mathbf{X}}$ are continuous functions of the data matrix, and the data have an absolutely continuous distribution, we have $\mathbb{P}(\widehat{\sigma}_{K-1}=T \text{ or } \widehat{\sigma}_K=T)=0$. Hence, we have
\[
\mathbb{P}(\widehat{\sigma}_{K-1}>T,\ \widehat{\sigma}_K<T) \ge \mathbb{P}(\mathcal{A}) - \mathbb{P}(\widehat{\sigma}_{K-1}=T \text{ or } \widehat{\sigma}_K=T) \to 1.
\]

Thus, with probability tending to one, $\widehat{\sigma}_{K-1}>T$ and $\widehat{\sigma}_K<T$. Since the singular values are non‑increasing, it follows that \(\widehat{\sigma}_i > T\) for all \(i = 1,\dots,K-1\), while \(\widehat{\sigma}_K < T\). Hence exactly \(K-1\) singular values exceed \(T\), i.e. \(r = K-1\).  The algorithm then returns \(\widehat{K} = r+1 = K\).  Thus \(\mathbb{P}(\widehat{K}=K) \to 1\), completing the proof.
\end{proof}

\subsection{Proof of Lemma \ref{pathology}}
\begin{proof}
Since $\mathbf{c}=t\mathbf{v}$, we have
\[
\mathbf{P} = t\mathbf{v}\mathbf{1}_n^\top + \mathbf{v}\mathbf{w}^\top = \mathbf{v}\bigl(t\mathbf{1}_n^\top + \mathbf{w}^\top\bigr),
\]
which is an outer product of $\mathbf{v}$ and the row vector $t\mathbf{1}_n^\top+\mathbf{w}^\top$. For any $t>1$, each entry of this row vector is $t\pm 1 > 0$, hence non-zero. Therefore, we have $\operatorname{rank}(\mathbf{P}) = 1$ and $\sigma_2(\mathbf{P}) = 0$. Then, by Weyl's inequality, we have
\[
\sigma_2(\mathbf{X}) \le \sigma_2(\mathbf{P}) + \|\mathbf{E}\| = \|\mathbf{E}\|.
\]

By Lemma \ref{lem:noise}, we know that $\|\mathbf{E}\| \le T$ with probability tending to $1$, so $\sigma_2(\mathbf{X}) < T$ w.h.p. For the largest singular value, we have
\[
\sigma_1(\mathbf{X}) \ge \sigma_1(\mathbf{P}) - \|\mathbf{E}\|.
\]

Now $\sigma_1(\mathbf{P}) = \|\mathbf{v}\|\cdot\|t\mathbf{1}_n^\top+\mathbf{w}^\top\| \ge \frac{\Delta}{2}\bigl(t\sqrt{n} - \sqrt{n}\bigr)$ by the reverse triangle inequality. Choosing $t$ large enough ensures $\sigma_1(\mathbf{P}) > 2T$, hence we have $\sigma_1(\mathbf{X}) > T$ w.h.p. Consequently, only exactly one singular value exceeds $T$, yielding $\widehat{K}_{\text{raw}} = 1$ while the true number of components is $2$. 
\end{proof}
\subsection{Additional lemma}
\begin{lem}\label{lem:uncentered_signal}
Assume that $\mathbf{M}$ has full column rank (so $\sigma_K(\mathbf{M})>0$).  
When $K\ge 2$, the $K$-th singular value of the uncentered signal matrix $\mathbf{P}=\mathbf{M}\mathbf{Z}^{\top}$ satisfies
\[
\sigma_K(\mathbf{P}) \;\ge\; \frac{\Delta}{\kappa}\,\sqrt{\frac{\beta n}{2K}},
\]
where $\beta = \frac{\min_k n_k}{n/K}\in(0,1]$, $\Delta = \min_{k\neq\ell}\|\boldsymbol{\mu}_k-\boldsymbol{\mu}_\ell\|$, and $\kappa = \frac{\sigma_1(\mathbf{M})}{\sigma_K(\mathbf{M})}$.
\end{lem}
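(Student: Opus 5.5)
Since $\mathbf{P}=\mathbf{M}\mathbf{Z}^\top$ with $\mathbf{M}\in\mathbb{R}^{p\times K}$ of full column rank and $\mathbf{Z}^\top\in\mathbb{R}^{K\times n}$ of rank $K$ (every cluster is nonempty), I would run Steps 2, 5 and 6 of the proof of Lemma~\ref{lem:signal} again, without the centering. That is, I would bound $\sigma_K(\mathbf{P})$ by the product of $\sigma_K(\mathbf{M})$ and $\sigma_K(\mathbf{Z}^\top)$ and control each factor separately.

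First I would establish the product inequality $\sigma_K(\mathbf{M}\mathbf{Z}^\top)\ge \sigma_K(\mathbf{M})\,\sigma_K(\mathbf{Z}^\top)$. For any unit $\mathbf{y}\in\mathbb{R}^n$ in the row space of $\mathbf{Z}^\top$, we have $\|\mathbf{M}\mathbf{Z}^\top\mathbf{y}\|\ge\sigma_K(\mathbf{M})\|\mathbf{Z}^\top\mathbf{y}\|\ge\sigma_K(\mathbf{M})\sigma_K(\mathbf{Z}^\top)$. The first inequality uses that $\mathbf{M}$ is injective with smallest singular value $\sigma_K(\mathbf{M})$. This subspace has dimension $K$, so Courant--Fischer gives the claim. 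It is the same inequality used in Step~2 of Lemma~\ref{lem:signal}, now with $i=K$.

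Second, I would compute $\sigma_K(\mathbf{Z}^\top)$ exactly. The columns of $\mathbf{Z}$ have disjoint supports, so $\mathbf{Z}^\top\mathbf{Z}=\mathbf{D}=\operatorname{diag}(n_1,\dots,n_K)$. Its smallest eigenvalue is $\min_k n_k$, hence $\sigma_K(\mathbf{Z}^\top)=\sqrt{\min_k n_k}\ge\sqrt{\beta n/K}$. This step is simpler than Step~4 of Lemma~\ref{lem:signal}, because there is no rank-one correction $\frac1n\mathbf{n}\mathbf{n}^\top$ to handle and so no need to restrict to $\mathbf{1}_K^\perp$.

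Third, I would take $\sigma_K(\mathbf{M})\ge \Delta/(\sqrt2\,\kappa)$ verbatim from Step~5 of Lemma~\ref{lem:signal}. Picking any two distinct centres (possible since $K\ge2$) gives $\Delta\le\|\mathbf{M}(\mathbf{e}_i-\mathbf{e}_j)\|\le\sqrt2\,\sigma_1(\mathbf{M})$, and then $\sigma_K(\mathbf{M})=\sigma_1(\mathbf{M})/\kappa$. Combining the three bounds gives
\[
\sigma_K(\mathbf{P})\ge \frac{\Delta}{\sqrt2\,\kappa}\sqrt{\frac{\beta n}{K}}=\frac{\Delta}{\kappa}\sqrt{\frac{\beta n}{2K}}.
\]
The argument has no real obstacle. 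The only care needed is to justify the singular-value product inequality at index $K$, which relies on $\mathbf{M}$ having full column rank; this is exactly where the hypothesis $\sigma_K(\mathbf{M})>0$ enters.
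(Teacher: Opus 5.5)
Your proof is correct and takes essentially the paper's route: both arguments reduce $\sigma_K(\mathbf{P})$ to $\sigma_K(\mathbf{M})\sqrt{\min_k n_k}$ and then reuse the bound $\sigma_K(\mathbf{M})\ge\Delta/(\sqrt2\,\kappa)$. The only difference is cosmetic. The paper factors $\mathbf{Z}^\top=\mathbf{D}^{1/2}\mathbf{R}$ with $\mathbf{R}$ having orthonormal rows and uses orthogonal invariance to get $\sigma_K(\mathbf{P})=\sigma_K(\mathbf{M}\mathbf{D}^{1/2})$, whereas you apply the Courant--Fischer product inequality on the row space of $\mathbf{Z}^\top$ directly.
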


\begin{proof}
Let $\mathbf{D}=\operatorname{diag}(n_1,\dots,n_K)$.  Because $\mathbf{Z}^{\top}\mathbf{Z}=\mathbf{D}$, we can write $\mathbf{Z}^{\top} = \mathbf{D}^{1/2}\mathbf{R}$ where $\mathbf{R}\in\mathbb{R}^{K\times n}$ satisfies $\mathbf{R}\mathbf{R}^{\top}=\mathbf{I}_K$ (the rows of $\mathbf{R}$ are orthonormal).  Consequently,
\[
\mathbf{P} = \mathbf{M}\mathbf{D}^{1/2}\mathbf{R}.
\]

Since $\mathbf{R}$ has orthonormal rows, there exists an orthogonal matrix $\mathbf{Q}\in\mathbb{R}^{n\times n}$ such that
\[
\mathbf{R} = [\mathbf{I}_K,\;\mathbf{0}]\,\mathbf{Q}.
\]

Thus, we have
\[
\mathbf{P} = \mathbf{M}\mathbf{D}^{1/2}[\mathbf{I}_K,\;\mathbf{0}]\,\mathbf{Q}
= [\mathbf{M}\mathbf{D}^{1/2},\;\mathbf{0}]\,\mathbf{Q}.
\]

Multiplying by an orthogonal matrix on the right does not change the singular values, so $\sigma_K(\mathbf{P}) = \sigma_K(\mathbf{M}\mathbf{D}^{1/2})$. Now $\mathbf{M}\mathbf{D}^{1/2}$ is a $p\times K$ matrix.  For any $\mathbf{x}\in\mathbb{R}^K$ with $\|\mathbf{x}\|=1$, we have
\[
\|\mathbf{M}\mathbf{D}^{1/2}\mathbf{x}\| \ge \sigma_K(\mathbf{M})\,\|\mathbf{D}^{1/2}\mathbf{x}\|
\ge \sigma_K(\mathbf{M})\,\sigma_K(\mathbf{D}^{1/2})\,\|\mathbf{x}\|
= \sigma_K(\mathbf{M})\,\sqrt{\min_k n_k}.
\]

Taking the minimum over all unit vectors gives $\sigma_K(\mathbf{M}\mathbf{D}^{1/2}) \ge \sigma_K(\mathbf{M})\,\sqrt{\min_k n_k}$. Therefore, we get
\[
\sigma_K(\mathbf{P}) \ge \sigma_K(\mathbf{M})\,\sqrt{\frac{\beta n}{K}}.
\]

Finally, from the definition of $\Delta$, we get
\[
\Delta \le \|\boldsymbol{\mu}_i-\boldsymbol{\mu}_j\| \le \|\mathbf{M}\|\,\|\mathbf{e}_i-\mathbf{e}_j\|
= \sigma_1(\mathbf{M})\sqrt{2},
\]
so $\sigma_1(\mathbf{M}) \ge \Delta/\sqrt{2}$.  Using $\kappa = \sigma_1(\mathbf{M})/\sigma_K(\mathbf{M})$, we obtain $\sigma_K(\mathbf{M}) \ge \Delta/(\sqrt{2}\kappa)$.  Thus, we obtain
\[
\sigma_K(\mathbf{P}) \ge \frac{\Delta}{\sqrt{2}\kappa}\,\sqrt{\frac{\beta n}{K}}
= \frac{\Delta}{\kappa}\,\sqrt{\frac{\beta n}{2K}},
\]
which completes the proof.
\end{proof}
\end{appendices}


\bibliography{reference}

@article{jain1999data,
  title={Data clustering: a review},
  author={Jain, Anil K and Murty, M Narasimha and Flynn, Patrick J},
  journal={ACM Computing Surveys (CSUR)},
  volume={31},
  number={3},
  pages={264--323},
  year={1999},
  publisher={Acm New York, NY, USA}
}

@article{pearson1894iii,
  title={III. Contributions to the mathematical theory of evolution},
  author={Pearson, Karl},
  journal={Proceedings of the Royal Society of London},
  volume={54},
  number={326-330},
  pages={329--333},
  year={1894},
  publisher={The Royal Society London}
}

@article{jain2010data,
  title={Data clustering: 50 years beyond K-means},
  author={Jain, Anil K},
  journal={Pattern Recognition Letters},
  volume={31},
  number={8},
  pages={651--666},
  year={2010},
  publisher={Elsevier}
}

@article{ndaoud2022sharp,
  title={Sharp optimal recovery in the two component Gaussian mixture model},
  author={Ndaoud, Mohamed},
  journal={Annals of Statistics},
  volume={50},
  number={4},
  pages={2096--2126},
  year={2022},
  publisher={Institute of Mathematical Statistics}
}

@article{li2025exact,
  title={Exact recovery of community detection in k-community Gaussian mixture models},
  author={Li, Zhongyang},
  journal={European Journal of Applied Mathematics},
  volume={36},
  number={3},
  pages={491--523},
  year={2025},
  publisher={Cambridge University Press}
}

@article{chen2024achieving,
  title={Achieving optimal clustering in Gaussian mixture models with anisotropic covariance structures},
  author={Chen, Xin and Zhang, Anderson Ye},
  journal={Advances in Neural Information Processing Systems},
  volume={37},
  pages={113698--113741},
  year={2024}
}

@article{lu2016statistical,
  title={Statistical and computational guarantees of lloyd's algorithm and its variants},
  author={Lu, Yu and Zhou, Harrison H},
  journal={arXiv preprint arXiv:1612.02099},
  year={2016}
}

@article{chen2021cutoff,
  title={Cutoff for exact recovery of gaussian mixture models},
  author={Chen, Xiaohui and Yang, Yun},
  journal={IEEE Transactions on Information Theory},
  volume={67},
  number={6},
  pages={4223--4238},
  year={2021},
  publisher={IEEE}
}

@article{loffler2021optimality,
  title={Optimality of spectral clustering in the Gaussian mixture model},
  author={L{\"o}ffler, Matthias and Zhang, Anderson Y and Zhou, Harrison H},
  journal={Annals of Statistics},
  volume={49},
  number={5},
  pages={2506--2530},
  year={2021},
  publisher={Institute of Mathematical Statistics}
}

@misc{poker_hand_158,
  author       = {Cattral, Robert and Oppacher, Franz},
  title        = {{Poker Hand}},
  year         = {2002},
  howpublished = {UCI Machine Learning Repository}
}

@article{budanova2025penalized,
  title={Penalized estimation of finite mixture models},
  author={Budanova, Sofya},
  journal={Journal of Econometrics},
  volume={249},
  pages={105958},
  year={2025},
  publisher={Elsevier}
}

@article{tibshirani2001estimating,
  title={Estimating the number of clusters in a data set via the gap statistic},
  author={Tibshirani, Robert and Walther, Guenther and Hastie, Trevor},
  journal={Journal of the royal statistical society: series b (statistical methodology)},
  volume={63},
  number={2},
  pages={411--423},
  year={2001},
  publisher={Wiley Online Library}
}

@article{huang2017model,
  title={Model selection for Gaussian mixture models},
  author={Huang, Tao and Peng, Heng and Zhang, Kun},
  journal={Statistica Sinica},
  pages={147--169},
  year={2017},
  publisher={JSTOR}
}

@article{hamerly2003learning,
  title={Learning the k in k-means},
  author={Hamerly, Greg and Elkan, Charles},
  journal={Advances in Neural Information Processing Systems},
  volume={16},
  year={2003}
}

@article{malsiner2016model,
  title={Model-based clustering based on sparse finite Gaussian mixtures},
  author={Malsiner-Walli, Gertraud and Fr{\"u}hwirth-Schnatter, Sylvia and Gr{\"u}n, Bettina},
  journal={Statistics and Computing},
  volume={26},
  number={1},
  pages={303--324},
  year={2016},
  publisher={Springer}
}

@inproceedings{pelleg2000x,
  title={X-means: Extending k-means with efficient estimation of the number of clusters.},
  author={Pelleg, Dan and Moore, Andrew W and others},
  booktitle={ICML},
  volume={1},
  pages={727--734},
  year={2000},
  organization={Stanford, CA}
}

@article{akaike2003new,
  title={A new look at the statistical model identification},
  author={Akaike, Hirotugu},
  journal={IEEE Transactions on Automatic Control},
  volume={19},
  number={6},
  pages={716--723},
  year={2003},
  publisher={Ieee}
}

@article{schwarz1978estimating,
  title={Estimating the dimension of a model},
  author={Schwarz, Gideon},
  journal={Annals of Statistics},
  pages={461--464},
  year={1978},
  publisher={JSTOR}
}

@article{melnykov2012initializing,
  title={Initializing the EM algorithm in Gaussian mixture models with an unknown number of components},
  author={Melnykov, Volodymyr and Melnykov, Igor},
  journal={Computational Statistics \& Data Analysis},
  volume={56},
  number={6},
  pages={1381--1395},
  year={2012},
  publisher={Elsevier}
}

@article{feng2006pg,
  title={PG-means: learning the number of clusters in data},
  author={Feng, Yu and Hamerly, Greg},
  journal={Advances in Neural Information Processing Systems},
  volume={19},
  year={2006}
}

@article{teklehaymanot2018bayesian,
  title={Bayesian cluster enumeration criterion for unsupervised learning},
  author={Teklehaymanot, Freweyni K and Muma, Michael and Zoubir, Abdelhak M},
  journal={IEEE Transactions on Signal Processing},
  volume={66},
  number={20},
  pages={5392--5406},
  year={2018},
  publisher={IEEE}
}

@article{constantinopoulos2006bayesian,
  title={Bayesian feature and model selection for Gaussian mixture models},
  author={Constantinopoulos, Constantinos and Titsias, Michalis K and Likas, Aristidis},
  journal={IEEE Transactions on Pattern Analysis and Machine Intelligence},
  volume={28},
  number={6},
  pages={1013--1018},
  year={2006},
  publisher={IEEE}
}

@article{fisher1936use,
  title={The use of multiple measurements in taxonomic problems},
  author={Fisher, Ronald A},
  journal={Annals of Eugenics},
  volume={7},
  number={2},
  pages={179--188},
  year={1936},
  publisher={Wiley Online Library}
}

@article{campbell1974multivariate,
  title={A multivariate study of variation in two species of rock crab of the genus Leptograpsus},
  author={Campbell, NA and Mahon, RJ},
  journal={Australian Journal of Zoology},
  volume={22},
  number={3},
  pages={417--425},
  year={1974},
  publisher={CSIRO Publishing}
}

@article{hull2002database,
  title={A database for handwritten text recognition research},
  author={Hull, Jonathan J.},
  journal={IEEE Transactions on Pattern Analysis and Machine Intelligence},
  volume={16},
  number={5},
  pages={550--554},
  year={2002},
  publisher={IEEE}
}

@article{fop2018variable,
author = {Michael Fop and Thomas Brendan Murphy},
title = {{Variable selection methods for model-based clustering}},
volume = {12},
journal = {Statistics Surveys},
number = {none},
publisher = {Amer. Statist. Assoc., the Bernoulli Soc., the Inst. Math. Statist., and the Statist. Soc. Canada},
pages = {18 -- 65},
year = {2018}}

@article{gormley2023model,
  title={Model-based clustering},
  author={Gormley, Isobel Claire and Murphy, Thomas Brendan and Raftery, Adrian E},
  journal={Annual Review of Statistics and Its Application},
  volume={10},
  number={1},
  pages={573--595},
  year={2023},
  publisher={Annual Reviews}
}

@article{bishop2006pattern,
  title={Pattern recognition and machine learning},
  author={Bishop, Christopher M and Nasrabadi, Nasser M},
  volume={4},
  number={4},
  year={2006},
  publisher={Springer}
}

@inproceedings{corduneanu2001variational,
  title={Variational Bayesian model selection for mixture distributions},
  author={Corduneanu, Adrian and Bishop, Christopher M},
  booktitle={Proceedings Eighth International Conference on Artificial Intelligence and Statistics},
  pages={27--34},
  year={2001},
  organization={Morgan Kaufmann}
}

@article{mclachlan2000finite,
  title={Finite mixture models},
  author={McLachlan, Geoffrey J and Peel, David},
  year={2000},
  publisher={John Wiley \& Sons}
}

@inproceedings{lindsay1995mixture,
  title={Mixture models: theory, geometry, and applications},
  author={Lindsay, Bruce G},
  year={1995},
  organization={Ims}
}

@article{dempster1977maximum,
  title={{Maximum likelihood from incomplete data via the EM algorithm}},
  author={Dempster, Arthur P and Laird, Nan M and Rubin, Donald B},
  journal={Journal of the Royal Statistical Society: Series B (Methodological)},
  volume={39},
  number={1},
  pages={1--22},
  year={1977},
  publisher={Wiley Online Library}
}

\end{document}